\documentclass{article} 
\usepackage{iclr2022_conference,times}


\usepackage{amsmath,amsfonts,bm}









\def\eqref#1{equation~\ref{#1}}









\def\1{\bm{1}}




\def\rva{{\mathbf{a}}}

\def\rvu{{\mathbf{i}}}

\def\rvn{{\mathbf{n}}}

\def\rvu{{\mathbf{u}}}
\def\rvv{{\mathbf{v}}}
\def\rvw{{\mathbf{w}}}
\def\rvx{{\mathbf{x}}}

\def\rvz{{\mathbf{z}}}







\DeclareMathAlphabet{\mathsfit}{\encodingdefault}{\sfdefault}{m}{sl}
\SetMathAlphabet{\mathsfit}{bold}{\encodingdefault}{\sfdefault}{bx}{n}













\usepackage{url}
\usepackage{wrapfig}
\usepackage{amsmath,amssymb,amsthm}
\usepackage{cleveref}
\usepackage{graphicx}
\usepackage{caption}
\usepackage{subcaption}
\usepackage{enumitem}
\usepackage{bbm}

\def\rva{{\mathbf{a}}}

\def\rvu{{\mathbf{i}}}

\def\rvn{{\mathbf{n}}}

\def\rvu{{\mathbf{u}}}
\def\rvv{{\mathbf{v}}}
\def\rvw{{\mathbf{w}}}
\def\rvx{{\mathbf{x}}}

\def\rvz{{\mathbf{z}}}


\newtheorem{lemma}{Lemma}
\newtheorem{thm}{Theorem}
\newtheorem{corollary}{Corollary}
\newtheorem{prop}{Proposition}

\theoremstyle{definition}


\newtheorem{remark}{Remark}
\newtheorem{assumption}{Assumption}

\title{Transition to Linearity of Wide Neural Networks is an Emerging Property of Assembling Weak Models}

\author{Chaoyue Liu \\
Depart. of Computer Science\\
Ohio State University\\
\texttt{liu.2656@osu.edu} \\
\and
Libin Zhu \\
Depart. of Computer Science\\
UC, San Diego \\
\texttt{l5zhu@ucsd.edu} \\
\and
Mikhail Belkin \\
HDSI\\
UC, San Diego \\
\texttt{mbelkin@ucsd.edu} \\
}

%

\iclrfinalcopy 
\begin{document}

\maketitle

\begin{abstract}
Wide neural networks with linear output layer have been shown to be near-linear, and to have near-constant neural tangent kernel (NTK), in a region containing the optimization path of gradient descent. These findings seem counter-intuitive since in general neural networks are highly complex models. Why does a linear structure emerge when the networks become wide? 
In this work, we provide a new perspective on this ``transition to linearity'' by considering a neural network as an assembly model recursively built from a set of sub-models corresponding to individual neurons. In this view, we show that the linearity of wide neural networks is, in fact, an emerging property  of  assembling a large number of diverse ``weak'' sub-models, none of which dominate the assembly. 
\end{abstract}

\section{Introduction}

Success of gradient descent methods for optimizing neural networks, which generally correspond to highly non-convex loss functions, has long been a challenge to theoretical analysis. A series of recent works including~\cite{du2018gradient, du2018gradientdeep,allen2019convergence,zou2018stochastic, oymak2020toward} showed that convergence can indeed be shown for certain types of wide networks. Remarkably, \cite{jacot2018neural} demonstrated that, when the network width goes to infinity, the Neural Tangent Kernel (NTK) of the network becomes constant during training with gradient flow, a continuous time limit of gradient descent. Based on that \cite{lee2019wide} showed that the training dynamics of gradient flow in the space of parameters for the infinite width network is equivalent to that of a linear model.

 As discussed in \citep{liu2020linearity}, the constancy of the Neural Tangent Kernel  stems from the fact that wide neural networks with linear output layer ``transition to linearity'' as their width increases to infinity. The network becomes progressively more linear in $O(1)$-neighborhoods around the network initialization, as the network width grows. Specifically, consider the neural network as a function $f(\theta)$ of its parameters $\theta$ and write the Taylor expansion with the Lagrange remainder term:
\begin{equation}\label{eq:lagrange}
    f(\theta) = f(\theta_0) + \nabla f(\theta_0)^T (\theta-\theta_0) + \frac{1}{2}(\theta-\theta_0)^TH(\xi)(\theta-\theta_0),
\end{equation}
where $H$ is the Hessian, the second derivative matrix of $f$, and $\xi$ is a point between $\theta$ and $\theta_0$. The ``transition to linearity''  is the phenomenon when the quadratic term $\frac{1}{2}(\theta-\theta_0)^TH(\xi)(\theta-\theta_0)$ tends to zero in an  $O(1)$ ball around $\theta_0$ as the network width increases to infinity. 

In particular, transition to linearity provides a method for showing that the square loss function of wide neural networks satisfy the Polyak-Lojasiewicz (PL) inequality which guarantees convergence of (Stochastic) Gradient Descent to a global minimum~\citep{liu2022loss}.

While the existing analyses demonstrate transition to linearity mathematically, the underlying structure of that phenomenon does not appear to be fully clear. What  structural properties account for it? Are they specific to neural networks, or also apply to more general models?

In this paper, we aim to explain this phenomenon from a more structural point of view and to reveal some underlying mechanisms. We provide a new perspective in which a neural network, as well as each of its hidden layer neurons before activation, can be viewed as an ``assembly model'', built up by linearly assembling a set of sub-models, corresponding to the neurons from the previous layer. Specifically, a $(l+1)$-layer pre-activation neuron 
\[
\tilde\alpha^{(l+1)} = \frac{1}{\sqrt{m}}\sum_{i=1}^{m} w_i \alpha^{(l)}_i,
\]  
as a linear combination of the $l$-th layer neurons $\alpha^{(l)}_i$, is considered as an assembly model, whereas the $l$-layer neurons  $\alpha^{(l)}_i$ are considered as sub-models. Furthermore, each pre-activation $\tilde{\alpha}^{(l)}_i$ is also an assembly model constructed from $(l-1)$-th layer neurons ${\alpha}^{(l-1)}_i$. In this sense, the neural network is considered as a multi-level assembly model.
To show $O(1)$-neighborhood linearity, we prove that the quadratic term in the Taylor expansion Eq.(\ref{eq:lagrange}) vanishes in $O(1)$-neighborhoods of network initialization, as a consequence of assembling sufficiently many sub-models .

To illustrate the idea of assembling, we start with a simple case: assembling independent sub-models in Section \ref{sec:independent_models}. The key finding is that, as long as the  assembly model is not dominated by one or a few  sub-models, the quadratic term in the Taylor expansion Eq.(\ref{eq:lagrange}) becomes small in $O(1)$-neighborhoods of the parameter space when the number of sub-models $m$ is tends to infinity.  This means that as $m$ increases to infinity, the assembly model becomes a linear function of parameters. 
Since we put almost no requirements on the form of sub-models, it is the assembling process that results in the linearity of the assembly model. This case includes two-layer neural networks as examples.

For deep neural networks (Section \ref{sec:neural_networks}), we consider a neural network as a multi-level assembly model 
each neuron is considered as an assembly model constructed iteratively from all the neurons from the previous layer. 
We then follow an inductive argument: using near-linearity of previous-layer pre-activation neurons to show the linearity of next-layer pre-activation neurons. Our key finding is that, when the network width is large, the neurons within the same layer become essentially independent to each other in the sense that their gradient directions are  orthogonal in the parameter space. This orthogonality allows for the existence of a new coordinate system such that these neuron gradients are parallel to the new coordinate axes. Within the new coordinate system, 
one can apply the argument of assembling independent sub-models, and obtain the $O(1)$-neighborhood linearity of the pre-activation neurons, as well as the output of the neural network.

We further point out that this assembling viewpoint and the $O(1)$-neighborhood linearity can be extended to more general neural network architectures, e.g., DenseNet. 

We end this section by commenting on a few closely related concepts and point out some of the differences.
\paragraph{Boosting. } Boosting \citep{schapire1990strength,freund1995boosting}, which is a popular method that combines multiple ``weak'' models to produce a powerful ensembe model, has a similar form with the assembly model, Eq.(\ref{eq:super_f_indep}), see, e.g., \cite{friedman2000additive}. However, we note a few key differences between the two. First, in boosting, each ``weak'' model is  trained {\it separately} on the dataset and the coefficients (i.e., $v_i$ in Eq.(\ref{eq:super_f_indep})) of the weak models are determined by the model performance. In training an assembly model, the ``weak'' sub-models (i.e., hidden layer neurons), are never directly evaluated on the training dataset, and the coefficients $v_i$ are considered as parameters of the assembly model and trained by gradient descent. Second, in boosting, different data samples may have different sample weights. In assembling, the data samples always have a uniform weight.

\paragraph{Bagging and Reservoir computing.} Bagging~\citep{breiman1996bagging} and Reservoir computing~\citep{jaeger2001echo} are two other ways of combining multiple sub-models to build a single model. In bagging, each sub-model is individually trained and the ensemble model is simply the average or the max of the sub-models. In reservoir computing, each sub-model is  fixed, and only the coefficients of the linear combination are trainable.

\paragraph{Notation.}
We use bold lowercase letters, e.g., $\rvv$, to denote vectors,  capital letters, e.g., $W$, to denote matrices. We use $\|\cdot\|$ to denote the Euclidean norm of vectors and spectral norm (i.e., matrix 2-norm) for matrices.
We denote the set $\{1,2,\cdots, n\}$ as $[n]$.  
We use $\nabla_{\rvw} f$ to denote the partial derivative of $f$ with respect to $\rvw$. When $\rvw$ represents all of the parameters of $f$, we omit the subscript, i.e., $\nabla f$.

\section{Assembling Independent Models}\label{sec:independent_models}

In this section, we consider assembling a sufficiently large number $m$ of independent sub-models and show that the resulting assembly model is (approximately) linear in $O(1)$-neighborhood of the model parameter space.  

{\bf Ingredients: sub-models.}
Let's consider a set of $m$ (sub-)models $\{g_i\}_{i=1}^m$, where each model $g_i$, with a set of model parameters $\rvw_i\in \mathcal{D}_i \subset \mathcal{P}_i =\mathbb{R}^{p_i}$, takes an input $\rvx\in\mathcal{D}_{\rvx}\subset\mathbb{R}^d$ and outputs a scalar prediction $g_i(\rvw_i;\rvx)$. Here $\mathcal{P}_i$ is the parameter space of sub-model $g_i$ and $\mathcal{D}_{\rvx}$ is the domain of the input. By independence, we mean that any two distinct models $g_i$ and $g_j$ share no common model parameters:
\begin{equation}\label{eq:independence_para}
    \mathcal{P}_i \cap \mathcal{P}_j = \{0\}, \quad \forall i\ne j\in[m].
\end{equation}
In this sense, the change of $g_i$'s output, as a result of change of it parameters $\rvw_i$, does not affect the outputs of the other models $g_j$ where $j\ne i$.

In addition, we require that there is no dominating sub-models over the others. Specifically, we assume the following in this section:
\begin{assumption}[No dominating sub-models]\label{assu:no_dominating}
There exists a constant $c\in (0,1)$ independent of $m$ such that, for any parameter setting $\{\rvw_i: \rvw_i \in \mathcal{D}_i\}_{i=1}^m$ and input $\rvx\in\mathcal{D}_\rvx$, 
\begin{equation}
    \frac{\mathrm{median}[|g_1(\rvw_1;\rvx)|,\cdots,|g_m(\rvw_m;\rvx)|]}{\max[|g_1(\rvw_1;\rvx)|,\cdots,|g_m(\rvw_m;\rvx)|]} \ge c.
\end{equation}
Furthermore, we assume that  $\max[|g_1(\rvw_1;\rvx)|,\cdots,|g_m(\rvw_m;\rvx)|] \in [a,b] \subset \mathbb{R}$ for some constants $a>0$ and $b>0$.
\end{assumption}
\begin{remark}
This assumption guarantees that most of the outputs of the sub-models are at the same order, typically $O(1)$. It makes sure that the resulting assembly model is not dominated by one or a minor portion of the sub-models. This is typically seen for the neurons of neural networks.
\end{remark}
We further make the following technical assumption, which is common in the literature.
\begin{assumption}[Twice differentiablity and smoothness]\label{assu:smoothness}
Each sub-model $g_i$ is twice differentiable with respect to the parameters $\rvw_i$, and is $\beta$-smooth in the parameter space: there exists a positive number $\beta$ such that, for any $i\in[m]$,
    \begin{equation*}
    \|\nabla g_i(\rvw_i;\rvx) - \nabla g_i(\rvw'_i;\rvx)\| \le \beta\|\rvw_i-\rvw'_i\|.
    \end{equation*}
\end{assumption}
This assumption makes sure that for each  $g_i$, the gradient is well-defined and its Hessian has a bounded spectral norm.

{\bf Assembly model.} Based on these sub-models $\{g_i\}_{i=1}^m$, we construct an assembly model (or super-model) $f$, using linear combination as follows:
\begin{equation}\label{eq:super_f_indep}
    f(\theta;\rvx) := \frac{1}{s(m)}\sum_{i=1}^m v_i g_i(\rvw_i;\rvx),
\end{equation}
where $v_i$ is the weight of the sub-model $g_i$, and $1/s(m)$, as a function of $m$, is the scaling factor. Here, we denote $\theta$ as the set of parameters of the assembly model, 
    $\theta := (\rvw_1,\ldots, \rvw_m).$ 
The total number of parameters that $f$ has is $p=\sum_{i=1}^mp_i$. Typical choices for the weights $v_i$ are setting $v_i=1$ for all $i\in[m]$, or random i.i.d. drawing $v_i$ from some zero-mean probability distribution.

\begin{remark}
For the ease of analysis and simplicity of notation, we assumed that the outputs of  assembly model $f$ and sub-models $g_i$ are scalars. It is not difficult to see that our analysis below also applies to the scenarios of finite dimensional model outputs.
\end{remark}

{\bf The scaling factor $1/s(m)$.} The presence of the scaling factor $1/s(m)$ is to keep the output of assembly model $f$ at the order $O(1)$ w.r.t. $m$. In general, we expect that $s(m)$ grows with $m$. In particular, when $v_i$ are chosen i.i.d. from a probability distribution with mean zero, e.g., $\{-1,1\}$, the sum in Eq.(\ref{eq:super_f_indep}) is expected to be of the order $\sqrt{m}$,
under the Assumption \ref{assu:no_dominating}.  
In this case, the scaling factor $1/s(m) = O(1/\sqrt{m})$.

Now, we will show that the assembly model $f$ has a small 
 quadratic term in its Taylor expansion, when the size  of the set of sub-models i.e. $m$, is sufficiently large. 

\begin{thm}\label{thm:independent}
Consider the assembly model $f$ constructed in Eq.(\ref{eq:super_f_indep}) with each $v_i$ either set to be $1$ or randomly drawn from $\{-1,1\}$, and suppose Assumption \ref{assu:no_dominating} and \ref{assu:smoothness} hold. Given a positive number $R>0$ and a parameter setting $\theta_0\in\mathbb{R}^p$, for any $\theta\in\mathbb{R}^p$ such that $\|\theta-\theta_0\|\le R$, the absolute value of the quadratic term in Taylor expansion Eq.(\ref{eq:lagrange}) is bounded by:
\begin{equation}
    \left|\frac{1}{2}(\theta-\theta_0)^TH(\xi)(\theta-\theta_0)\right| \le \frac{\beta R^2}{2s(m)}.
\end{equation}

\end{thm}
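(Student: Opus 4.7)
The key structural observation is that the independence condition (\ref{eq:independence_para}) forces the Hessian of $f$ with respect to the full parameter vector $\theta = (\rvw_1,\ldots,\rvw_m)$ to be block diagonal. Since $\partial^2 g_i / \partial \rvw_i \partial \rvw_j = 0$ whenever $i\ne j$ (each $g_i$ depends only on its own $\rvw_i$), differentiating $f = \frac{1}{s(m)}\sum_i v_i g_i(\rvw_i;\rvx)$ twice gives
\begin{equation}
H(\xi) \;=\; \frac{1}{s(m)}\,\mathrm{diag}\!\bigl(v_1 H_1(\xi_1),\, v_2 H_2(\xi_2),\, \ldots,\, v_m H_m(\xi_m)\bigr),
\end{equation}
where $H_i$ denotes the Hessian of $g_i$ with respect to its own parameter block $\rvw_i$ and $\xi_i$ is the corresponding block of $\xi$.

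Next I would bound each diagonal block. By Assumption~\ref{assu:smoothness}, $g_i$ is $\beta$-smooth in $\rvw_i$, which is equivalent to $\|H_i(\xi_i)\| \le \beta$. Since $v_i \in \{-1,+1\}$ (or $v_i=1$), $|v_i|=1$, so each block has spectral norm at most $\beta/s(m)$. For a block diagonal symmetric matrix the overall spectral norm equals the maximum of the block spectral norms, hence $\|H(\xi)\| \le \beta/s(m)$.

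Finally I would apply the standard bound $|\vu^T A \vu| \le \|A\|\,\|\vu\|^2$ with $\vu = \theta-\theta_0$ and $A=H(\xi)$:
\begin{equation}
\left|\tfrac{1}{2}(\theta-\theta_0)^T H(\xi)(\theta-\theta_0)\right| \;\le\; \tfrac{1}{2}\,\|H(\xi)\|\,\|\theta-\theta_0\|^2 \;\le\; \frac{\beta R^2}{2\,s(m)},
\end{equation}
which is exactly the claimed inequality.

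\textbf{Where the work actually is.} The calculation itself is short; the conceptual content is entirely in the block-diagonal structure of $H$, which is what translates the architectural hypothesis (parameter independence across sub-models) into a uniform decay rate in $m$. I do not expect a genuine obstacle here. One subtlety worth flagging is that Assumption~\ref{assu:no_dominating} is \emph{not} used in the upper bound above; its role is only to justify the scaling $s(m)=\Theta(\sqrt{m})$ (in the random-sign case) so that the output of $f$ itself remains $O(1)$ while the quadratic remainder decays like $1/\sqrt{m}$. If one dropped Assumption~\ref{assu:no_dominating}, the bound $\beta R^2/(2 s(m))$ would still hold, but $s(m)$ might have to be taken much smaller to keep $f$ nontrivial, and the ``transition to linearity'' conclusion would be lost. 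Thus the plan is simply: (i) write out $H$ in block form using parameter independence; (ii) bound each block via $\beta$-smoothness; (iii) pass to the spectral norm and apply the quadratic-form inequality.
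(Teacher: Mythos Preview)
Your proposal is correct and follows essentially the same argument as the paper: both exploit that parameter independence makes $H$ block diagonal in the $\rvw_i$-blocks, bound each block via $\beta$-smoothness and $|v_i|=1$, and then use $\|\theta-\theta_0\|\le R$. The only cosmetic difference is that the paper sums the block-wise quadratic forms and invokes $\sum_i\|\rvw_i-\rvw_{i,0}\|^2=\|\theta-\theta_0\|^2$, whereas you pass through the global spectral norm $\|H(\xi)\|\le\beta/s(m)$ of the block-diagonal matrix; these are equivalent one-line manipulations. Your remark that Assumption~\ref{assu:no_dominating} is not used in the inequality itself is accurate and matches the paper's usage.
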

\begin{proof}
In what follows, we don't explicitly write out the dependence on $\theta$ and $\rvx$ in the Hessian $H:=\frac{\partial^2 f}{\partial \theta^2}$, for simplicity of notation.
We further denote $H_{g_i}:=\frac{\partial^2 g_i}{\partial \rvw_i^2}$ as the Hessian of $g_i$.

We decompose the  assembly model Hessian $H$ as a linear combination of $H_{g_i}$. By the definition of the assembly model in Eq.(\ref{eq:super_f_indep}), an arbitrary entry $H_{jk}$ of the Hessian can be written as:
$
H_{jk} = \frac{1}{s(m)}\sum_{i=1}^m v_i\frac{\partial^2g_i}{\partial \theta_j\partial \theta_k}.
$
Here $\theta_j, \theta_k$ are two individual parameters of the assembly model $f$.
Note that $\frac{\partial^2g_i}{\partial \theta_j\partial \theta_k}$ is non-zero, only if both $\theta_j$ and $\theta_k$ are parameters of sub-model $g_i$, i.e.,  $\theta_j,\theta_k \in \mathcal{P}_i$. Hence, the assembly model Hessian can be decomposed as a linear combination of sub-model Hessians: $H = \frac{1}{s(m)}\sum_{i=1}^m v_i H_{g_i}$. Therefore, the quadratic term becomes
\begin{equation}
   \frac{1}{s(m)}\sum_{i=1}^m v_i \left[\frac{1}{2}(\theta-\theta_0)^TH_{g_i}(\xi)(\theta-\theta_0)\right] = \frac{1}{s(m)}\sum_{i=1}^m v_i \left[\frac{1}{2}(\rvw_{i}-\rvw_{i,0})^TH_{g_i}(\xi)(\rvw_{i}-\rvw_{i,0})\right]\nonumber
\end{equation}
and its absolute value is bounded by
\begin{align*}
    \Big|\frac{1}{2}(\theta-\theta_0)^TH(\xi)(\theta-\theta_0)\Big| \le \frac{1}{2s(m)}\sum_{i=1}^m |v_i|\cdot\|H_{g_i}\|\cdot\|\rvw_{i}-\rvw_{i,0}\|^2 \le \frac{\beta}{2s(m)}\sum_{i=1}^m \|\rvw_{i}-\rvw_{i,0}\|^2
\end{align*}
In the second inequality, we used that $|v_i|=1$ and that $g_i$ is $\beta$-smooth.
Because of the independence of the sub-models as seen in Eq.(\ref{eq:independence_para}), the summation in the above equation becomes $\|\theta-\theta_0\|^2$, which is bounded by $R^2$, as stated in the theorem condition. Therefore, we conclude the theorem.
\end{proof}
It is important to note that $\beta$ is a constant and $1/s(m) = O(1/\sqrt{m})$. Then we have the following corollary in the limiting case.
\begin{corollary}\label{cor:independent}
Consider the assembly model $f$ under the same setting as in Theorem \ref{thm:independent}. If the number of sub-models $m$ increases to infinity, then for all parameter $\theta_0$ and input $\rvx$, the quadratic term
\begin{equation}
   \Big|\frac{1}{2}(\theta-\theta_0)^TH(\xi)(\theta-\theta_0)\Big| \to 0,
\end{equation}
as long as $\theta$ is within an $O(1)$-neighborhood of $\theta_0$, i.e., $\|\theta-\theta_0\|^2\le R$ for some constant $R>0$.
\end{corollary}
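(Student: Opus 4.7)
The proof is essentially a direct consequence of Theorem~\ref{thm:independent}, so my plan is to plug in the quantitative bound from that theorem and pass to the limit $m\to\infty$. First, I would invoke Theorem~\ref{thm:independent} to obtain, for every $\theta$ with $\|\theta-\theta_0\|\le R$, the uniform estimate
\begin{equation*}
\left|\tfrac{1}{2}(\theta-\theta_0)^T H(\xi)(\theta-\theta_0)\right| \le \frac{\beta R^2}{2\, s(m)}.
\end{equation*}
Crucially, both the smoothness constant $\beta$ from Assumption~\ref{assu:smoothness} and the radius $R$ fixed in the hypothesis are independent of $m$, and the bound is uniform in $\theta_0$ and in the input $\rvx$, so nothing in the numerator changes as $m$ grows.

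Next, I would verify that $s(m)\to\infty$ as $m\to\infty$ for both weight choices covered by the theorem. For the random choice $v_i\in\{-1,1\}$, the paragraph preceding Theorem~\ref{thm:independent} already notes that Assumption~\ref{assu:no_dominating} gives $s(m)=\Theta(\sqrt{m})$, hence $1/s(m)=O(1/\sqrt{m})\to 0$. For the deterministic choice $v_i=1$, Assumption~\ref{assu:no_dominating} (together with the lower bound $a>0$ on the maximum $|g_i|$ and the control of the median by $c$) forces the sum $\sum_i |g_i|$ to grow at least linearly in $m$, so $s(m)=\Theta(m)$ and again $1/s(m)\to 0$.

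Combining these two steps, the right-hand side $\beta R^2/(2\,s(m))$ vanishes as $m\to\infty$, which yields the claimed convergence of the quadratic term to zero, uniformly over the $O(1)$-ball $\|\theta-\theta_0\|\le R$ and over all choices of $\theta_0$ and $\rvx$. I do not foresee any serious obstacle: the corollary is really just repackaging the quantitative estimate of Theorem~\ref{thm:independent} as a qualitative limit, and the only subtlety is to confirm that none of the constants $\beta$, $R$, $a$, $b$, $c$ appearing in the bound secretly depend on $m$, which is guaranteed by Assumptions~\ref{assu:no_dominating} and~\ref{assu:smoothness}.
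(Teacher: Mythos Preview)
Your proposal is correct and mirrors the paper's argument: the paper simply remarks, immediately before stating the corollary, that $\beta$ is a constant and $1/s(m)=O(1/\sqrt{m})$, and then deduces the limit directly from the bound of Theorem~\ref{thm:independent}. Your write-up is in fact slightly more careful than the paper's, since you separately justify the growth of $s(m)$ in the deterministic case $v_i=1$, which the paper's discussion of the scaling factor did not explicitly treat.
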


\paragraph{Example: Two-layer neural networks.} A good example of this kind of assembly model is the two-layer neural network. A two-layer  neural network is mathematically defined as:
\begin{equation}\label{eq:two-layer_nn}
    f(W,\rvv;\rvx) = \frac{1}{\sqrt{m}}\sum_{i=1}^m u_i\sigma(\rvw_i^T\rvx),
\end{equation}
where $m$ is the number of hidden layer neurons, $\sigma(\cdot)$ is the activation function, $\rvx\in\mathbb{R}^d$ is the network input, and $W\in\mathbb{R}^{m\times d}$ and $\rvu\in\mathbb{R}^m$ are the parameters for the first and second layer, respectively. Here, we can view the $i$-th hidden neuron and all the parameters $\rvw_i$ and $u_i$ that connect to it as the $i$-th sub-model: $g_i=u_i\sigma(\rvw_i^T\rvx)$. We see that these sub-models do not share parameters, and each sub-model has $d+1$ parameters.
In addition, the  weights of the sub-models are all $1$. By Corollary \ref{cor:independent}, the two-layer neural network  becomes a linear model in any $O(1)$-neighborhoods, as the network width $m$ increases to infinity. This is consistent with the previous observation that a two-layer neural network transitions to linearity~\cite{liu2020linearity}.

\paragraph{Gaussian distributed weights $v_i$.} Another common way to set the weights $v_i$ of the sub-models is to independently draw each $v_i$ from $\mathcal{N}(0,1)$. In this case, $v_i$ is unbounded, but with high probability the quadratic term is still $O(\log(m)/\sqrt{m})$.
Please see the detailed analysis in Appendix~\ref{appsec:a}. 

\paragraph{Hierarchy of assembly models.} 
In principle, we can consider the assembly model $f$, together with multiple similar models independent to each other, as sub-models, and construct a higher-level assembly model. 
 Repeating this procedures, we can have a hierarchy of assembly models. Our analysis above also applies to this case and each assembly model is also $O(1)$-neighborhood linear, when the number of its sub-models is sufficiently large. However, one drawback of this hierarchy is that the total number of parameters of the highest level assembly model increase exponentially with the number of levels. We will see shortly that wide neural networks, as a hierarchy of assembly models, allows overlapping between sub-models, but still keeping the $O(1)$-neighborhood linearity.

\section{Deep Neural Networks as Assembling Models} \label{sec:neural_networks}
In this section, we view deep neural networks as multi-level assembly models and show how the $O(1)$-neighborhood linearity arises naturally as a consequence of assembling.

{\bf Setup.} We start with the definition of multi-layer neural networks. A $L$-layer full-connected neural network is defined as follows:
\begin{align}
 &\alpha^{(0)} = \rvx, \nonumber\\
 &{\alpha}^{(l)} = \sigma(\tilde{\alpha}^{(l)}), \quad \tilde{\alpha}^{(l)}= \frac{1}{\sqrt{m_{l-1}}}W^{(l)}\alpha^{(l-1)}, \ \forall l = 1,2,\cdots, L, \label{eq:generalnn}\\
 &f = \tilde{\alpha}^{(L)},\nonumber
\end{align}
where $\sigma(\cdot)$ is the activation function and is applied entry-wise above. We assume $\sigma(\cdot)$ is twice differentiable to make sure that the network $f$ is twice differentiable and its Hessian is well-defined.
We also assume that $\sigma(\cdot)$ is an injective function, which includes the commonly used {\it sigmoid}, {\it tanh}, {\it softplus}, etc.
In the network, with $m_l$ being the width of $l$-th hidden layer, $\tilde{\alpha}^{(l)}\in \mathbb{R}^{m_l}$ 
 (called {\it pre-activations}) 
and $\alpha^{(l)}\in \mathbb{R}^{m_l}$
 (called {\it post-activations})
represent the vectors of the $l$-th hidden layer neurons before and after the activation function, respectively.  Denote $\theta:=(W^{(1)},\ldots,W^{(L)})$, with $W^{(l)}\in \mathbb{R}^{m_{l-1}\times m_l}$, as  all the parameters of the network, and $\theta^{(l)}:=(W^{(1)},\ldots,W^{(l)})$ as the parameters before layer $l$.
We also denote $p$ and $p^{(l)}$ as the dimension of $\theta$ and $\theta^{(l)}$, respectively. Denote $\theta(\alpha^{(l)}_i)$ as the set of the parameters that neuron $\alpha^{(l)}_i$ depends on.

This neural network is typically initialized following Gaussian random initialization: each parameter is independently drawn from normal distribution, i.e., $(W^{(l)}_0)_{ij}\sim \mathcal{N}(0,1)$. In this paper, we focus on the $O(1)$-neighborhood of the initialization $\theta_0$. 
As pointed out by \cite{liu2020linearity}, the whole gradient descent trajectory is contained in a $O(1)$-neighborhood of the initialization $\theta_0$ (more precisely, a Euclidean ball $B(\theta_0,R)$ with finite $R$).

{\bf Overview of the methodology.}
As it is defined recursively in Eq.(\ref{eq:generalnn}), we view the deep neural network as a multi-level assembly model. Specifically, a pre-activation neuron at a certain layer $l$ is considered as an assembly model constructed from the post-activation neurons at layer $l-1$, at the same time its post-activation also serves as a sub-model for the neurons in layer $l+1$. Hence, a pre-activation $\tilde{\alpha}^{(l)}_i$ is an $l$-level assembly model, and the post-activation ${\alpha}^{(l)}_i$ is a $(l+1)$-level sub-model. To prove the $O(1)$-neighborhood linearity of the neural network, we start from the linearity at the first layer, and then follow an inductive argument from layer to layer, up to the output. The main argument lies in the inductive steps, from layer $l$ to layer $l+1$. Our key finding is that, in the infinite width limit, the neurons within the same layer become independent to each other, which allows us using the arguments in Section~\ref{sec:independent_models} to prove the $O(1)$-neighborhood linearity.
Since the exact linearity happens in the infinite width limit, in the following analysis we take  $m_1,m_2,\ldots,m_{L-1}\to \infty$ sequentially, the same setting as in \cite{jacot2018neural}. Note that for neural networks that finite but large network width, the linearity will be approximate; the larger the width, the closer to linear.

\subsection{Base case: First and second hidden layer.}\label{subsec:base} The first hidden layer pre-activations are defined as: $\tilde{\alpha}^{(1)} = \frac{1}{\sqrt{m_0}}W^{(1)}\rvx$, where $m_0= d$. It is obvious that each element of $\tilde{\alpha}^{(1)}$ is linear in its parameters. Each second layer pre-activation $\tilde{\alpha}^{(1)}_i, i\in[m_2]$ can be considered as a two-layer neural network with parameters $\{W^{(1)}, \rvw^{(2)}_i\}$, where $\rvw^{(2)}_i$ is the $i$-th row of $W^{(2)}$. As we have seen in Section \ref{sec:independent_models}, the sub-models of a two-layer neural network share no common parameters, and the assembly model, which is the network itself, is $O(1)$-neighborhood linear in the limit of $m_1\to\infty$.

\subsection{From layer $l$ to layer $l+1$.}\label{sec:l_to_l1}
First, we make the induction hypothesis at layer $l$.
\begin{assumption}[Induction hypothesis]\label{assu:induction_hypo}
Assume that all the $l$-th layer pre-activation neurons $\tilde{\alpha}^{(l)}$ are $O(1)$-neighborhood linear in the limit of $m_1,\ldots,m_{l-1}\to \infty$.
\end{assumption}
Under this assumption, we will first show two key properties about the sub-models, i.e., the $l$-th layer post-activation neurons ${\alpha}^{(l)}$: (1) level sets of these neurons are hyper-planes with co-dimension $1$ in $O(1)$-neighborhoods; (2) normal vectors of these hyper-planes are orthogonal with probability one in the infinite width limit.

{\bf Setup.} Note that $l$-th layer neurons only depend on the parameters $\theta^{(l)}$. Without ambiguity, denote the parameter space spanned by $\theta^{(l)}$ as $\mathcal{P}$. We can write $\theta^{(l)}$ as
\[
\theta^{(l)} = \theta^{(l-1)} \cup \rvw_1^{(l)} \cup \ldots \cup  \rvw_{m_l}^{(l)},
\]
where $\rvw_i^{(l)}$ is the $i$-th row of the weight matrix $W^{(l)}$. One observation is that the set of parameters of neuron $\alpha^{(l)}_i$ is $\theta(\alpha^{(l)}_i) = \theta^{(l-1)} \cup\rvw_i^{(l)}$. Namely, $\theta^{(l-1)}$ are shared by all $l$-layer neurons and each parameter in $\rvw_i^{(l)}$ is privately owned by only one neuron $\alpha^{(l)}_i$.
Hence, we can decompose the parameter space $\mathcal{P}$ as: $\mathcal{P}=\bigoplus_{i=0}^{m_l}\mathcal{P}_i$, where common space $\mathcal{P}_0$ is spanned by $\theta^{(l-1)}$ and the private spaces $\mathcal{P}_i, i\ne 0,$ are spanned by $\rvw_i^{(l)}$. Define a set of projection operators $\{\pi_i\}_{i=0}^{m_l}$ such that, for any vector $\rvz\in \mathcal{P}$, $\pi_i(\rvz)\in \mathcal{P}_i,$ and $\sum_{i=0}^{m_l} \pi_i(\rvz) = \rvz.$

\noindent {\bf Property 1: Level sets of neurons are hyper-planes.} The first key observation is that each of the  $l$-th layer post-activations ${\alpha}^{(l)}$ has  linear level sets in the $O(1)$-neighborhoods.
\begin{thm}[Linear level sets of neurons]\label{thm:linear_level_sets}
Assume the induction hypothesis Assumption \ref{assu:induction_hypo} holds. Given a fix input $\rvx$, for each post-activation ${\alpha}^{(l)}_i$, $i\in[m_l]$,  the level set
\begin{equation}
   \mathcal{S}_c:= \{\theta^{(l)}:{\alpha}^{(l)}_i(\theta^{(l)};\rvx)  = c\} \cap \mathcal{N}(\theta_0), 
\end{equation}
is linear or an empty set,
for all $c\in\mathbb{R}$, where $\mathcal{N}(\theta_0)$ is an $O(1)$-neighborhood of $\theta_0$. Moreover, these level sets are parallel to each other: for any $c_1, c_2\in\mathbb{R}$, if $\mathcal{S}_{c_1} \ne \emptyset$ and $\mathcal{S}_{c_2}\ne \emptyset$, then $\mathcal{S}_{c_1}$ and $\mathcal{S}_{c_2}$ are parallel to each other.
\end{thm}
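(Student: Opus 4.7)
My plan is to reduce the theorem about the post-activation level sets to a statement about the pre-activation level sets via the injectivity of $\sigma$, and then invoke the induction hypothesis (Assumption \ref{assu:induction_hypo}), which asserts that $\tilde\alpha^{(l)}_i$ is, in the infinite-width limit, exactly affine in $\theta^{(l)}$ on $\mathcal{N}(\theta_0)$. Since the level sets of a non-constant affine function are precisely a parallel family of codimension-one affine hyperplanes (with a common normal), the theorem will follow once the pre-activation gradient at $\theta_0$ is shown to be nonzero.

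\textbf{Step-by-step outline.} First I would use the injectivity of $\sigma$ to write, for each $c\in\mathbb{R}$, either $c\notin\sigma(\mathbb{R})$ (in which case $\mathcal{S}_c=\emptyset$ and there is nothing to prove), or $c=\sigma(c')$ for a unique $c'$ and
\[
\mathcal{S}_c \;=\; \{\theta^{(l)} : \tilde\alpha^{(l)}_i(\theta^{(l)};\rvx)=c'\}\cap\mathcal{N}(\theta_0).
\]
Next, by Assumption \ref{assu:induction_hypo}, in the limit $m_1,\ldots,m_{l-1}\to\infty$ the quadratic remainder in the Taylor expansion of $\tilde\alpha^{(l)}_i$ about $\theta^{(l)}_0$ vanishes on $\mathcal{N}(\theta_0)$, so one has the exact identity
\[
\tilde\alpha^{(l)}_i(\theta^{(l)};\rvx) \;=\; \tilde\alpha^{(l)}_i(\theta^{(l)}_0;\rvx) \;+\; \rvb_i^{T}(\theta^{(l)}-\theta^{(l)}_0), \qquad \rvb_i := \nabla_{\theta^{(l)}}\tilde\alpha^{(l)}_i(\theta^{(l)}_0;\rvx),
\]
throughout $\mathcal{N}(\theta_0)$. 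Substituting into the set above shows that $\mathcal{S}_c$ is the intersection of $\mathcal{N}(\theta_0)$ with the affine hyperplane $\{\,\rvb_i^{T}(\theta^{(l)}-\theta^{(l)}_0)=c'-\tilde\alpha^{(l)}_i(\theta^{(l)}_0;\rvx)\,\}$, which is linear. Because the normal direction $\rvb_i$ does not depend on $c$, any two non-empty level sets $\mathcal{S}_{c_1}$ and $\mathcal{S}_{c_2}$ are translates of the same linear subspace and hence parallel.

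\textbf{Main obstacle.} The delicate point I expect to spend the most effort on is ensuring $\rvb_i\ne 0$, so that $\mathcal{S}_c$ really is codimension one rather than collapsing to all of $\mathcal{N}(\theta_0)$. I plan to verify this by inspecting the block of $\rvb_i$ associated with the private parameters $\rvw_i^{(l)}$: a direct computation from Eq.~(\ref{eq:generalnn}) gives $\partial\tilde\alpha^{(l)}_i/\partial\rvw_i^{(l)}=\alpha^{(l-1)}/\sqrt{m_{l-1}}$, and under the Gaussian initialization of the earlier layers together with the injectivity (hence non-degeneracy) of $\sigma$, this vector is nonzero with probability one. A secondary subtlety is that Assumption \ref{assu:induction_hypo} only gives exact linearity in the infinite-width limit; for large but finite widths the identities above hold only up to the quadratic remainder bounded by Theorem \ref{thm:independent} applied inductively, so $\mathcal{S}_c$ is a hyperplane only approximately, with the approximation improving as $m_1,\ldots,m_{l-1}\to\infty$.
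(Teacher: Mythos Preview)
Your proposal is correct and follows essentially the same route as the paper: reduce the post-activation level sets to pre-activation level sets via the injectivity of $\sigma$ (the paper writes $\{\alpha^{(l)}_i=c\}=\{\tilde\alpha^{(l)}_i=\sigma^{-1}(c)\}$ when the set is nonempty), and then invoke the induction hypothesis that $\tilde\alpha^{(l)}_i$ is linear on $\mathcal{N}(\theta_0)$ to conclude that its level sets are parallel affine hyperplanes. Your treatment is in fact more careful than the paper's, which does not explicitly address the non-degeneracy $\rvb_i\neq 0$ that you identify and handle via the private block $\partial\tilde\alpha^{(l)}_i/\partial\rvw^{(l)}_i=\alpha^{(l-1)}/\sqrt{m_{l-1}}$.
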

\begin{remark}
With abuse of notation, we did not explicitly write out the dependence of the level set on the neuron $\alpha^{(l)}_i$, the input $\rvx$, and the network initialization $\theta_0$.
\end{remark}
\begin{proof}
First, note that all the pre-activation neurons $\tilde{\alpha}^{(l)}$ are linear in $\mathcal{N}(\theta_0)$ in the limit of $m_1,\ldots,m_{l-1}\to \infty$, as assumed in the induction hypothesis. This linearity of these function guarantees that all the level sets of the pre-activations $ \{\theta^{(l)}:\tilde{\alpha}^{(l)}_i(\theta^{(l)};\rvx)  = c\} \cap \mathcal{N}(\theta_0)$, for all $c\in\mathbb{R}$ and $i\in[m_l]$ are either linear or empty sets, and they are parallel to each other. Since the activation function $\sigma(\cdot)$ is element-wisely applied to the neurons, it does not change the shape and direction of the level sets. Specifically, if $\{\theta^{(l)}:{\alpha}^{(l)}_i(\theta^{(l)};\rvx)  = c\}$ is non-empty, then \[
\{\theta^{(l)}:{\alpha}^{(l)}_i(\theta^{(l)};\rvx)  = c\} = \{\theta^{(l)}:\tilde{\alpha}^{(l)}_i(\theta^{(l)};\rvx)  = \sigma^{-1}(c)\}. 
\]
Therefore, $\{\theta^{(l)}:{\alpha}^{(l)}_i(\theta;\rvx)  = c\} \cap \mathcal{N}(\theta_0)$ is  linear or an empty set, and the activation function $\sigma(\cdot)$ preserves the parallelism.
\end{proof}
\begin{wrapfigure}[14]{r}{6.5cm}
  \vspace{-10pt}
\begin{center}
  \includegraphics[width=0.43\textwidth]{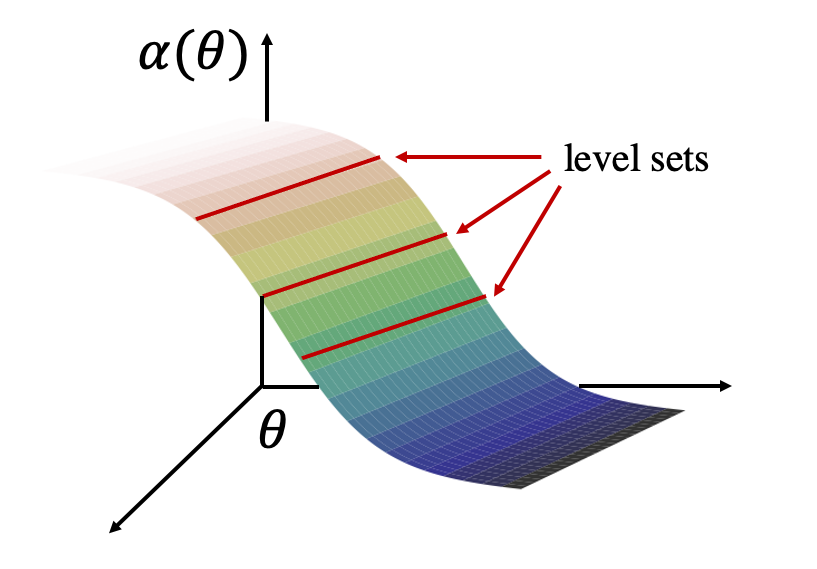} 
\end{center}
\vspace{-10pt}
\caption{A geometric view of the post-activation neurons. Level sets are parallel hyper-planes.}
\label{fig:geo_view_post_act}
\end{wrapfigure}
This means that, although each of the neurons ${\alpha}^{(l)}$ is constructed by a large number of neurons in previous layers containing tremendously many parameters and is transformed by non-linear functions, it actually has a very simple geometric structure. A geometric view of the post-activation neurons is illustrated in Figure~\ref{fig:geo_view_post_act}. 

As the neurons have scalar outputs and $\sigma(\cdot)$ is injective by assumption, each level set is a piece of a co-dimension $1$ hyper-plane in the $p$-dimensional parameter space $\mathcal{P}$. Hence, at each point of the hyper-plane there exists a unique (up to a negative sign) unit-length normal vector $\rvn$, which is perpendicular to the hyper-plane. A direct corollary of the linearity of the level sets, Theorem \ref{thm:linear_level_sets}, is that the normal vector $\rvn$ is identical everywhere in the $O(1)$-neighborhood $\mathcal{N}(\theta_0)$.
\begin{corollary}
Assume the induction hypothesis Assumption \ref{assu:induction_hypo} holds. Given a specific neuron $\alpha^{(l)}_i$ and an input $\rvx$, for any $\theta^{(l)}_1,\theta^{(l)}_2 \in \mathcal{P}\cap \mathcal{N}(\theta_0)$, $\rvn(\theta^{(l)}_1) = \rvn(\theta^{(l)}_2)$.
\end{corollary}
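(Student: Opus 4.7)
The plan is to deduce this corollary directly from Theorem \ref{thm:linear_level_sets} together with the chain rule. I would begin by recalling that under the induction hypothesis, Theorem \ref{thm:linear_level_sets} tells us that the level sets of $\alpha^{(l)}_i$ inside $\mathcal{N}(\theta_0)$ form a family of parallel co-dimension-$1$ hyperplanes. Two parallel hyperplanes in a Euclidean space share the same normal direction, so once we fix a sign convention (for instance, the direction in which $\alpha^{(l)}_i$ increases, which is unambiguous because the activation $\sigma$ is strictly monotonic — smooth plus injective on a connected domain), the unit normal is the same at every base point.

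To make this more concrete, I would exhibit the normal via the gradient. By the chain rule,
\begin{equation*}
\nabla_{\theta^{(l)}} \alpha^{(l)}_i(\theta^{(l)};\rvx) \;=\; \sigma'\!\bigl(\tilde{\alpha}^{(l)}_i(\theta^{(l)};\rvx)\bigr)\, \nabla_{\theta^{(l)}} \tilde{\alpha}^{(l)}_i(\theta^{(l)};\rvx).
\end{equation*}
Under Assumption \ref{assu:induction_hypo}, $\tilde{\alpha}^{(l)}_i$ is linear in $\theta^{(l)}$ throughout $\mathcal{N}(\theta_0)$, so $\nabla_{\theta^{(l)}} \tilde{\alpha}^{(l)}_i$ is a constant vector, call it $\rvg$, independent of the base point. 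The scalar factor $\sigma'(\tilde{\alpha}^{(l)}_i)$ only rescales $\rvg$; it does not rotate it. Hence the unit normal
\begin{equation*}
\rvn(\theta^{(l)}) \;=\; \frac{\nabla_{\theta^{(l)}} \alpha^{(l)}_i(\theta^{(l)};\rvx)}{\|\nabla_{\theta^{(l)}} \alpha^{(l)}_i(\theta^{(l)};\rvx)\|} \;=\; \frac{\sign\!\bigl(\sigma'(\tilde{\alpha}^{(l)}_i)\bigr)\, \rvg}{\|\rvg\|}
\end{equation*}
is constant on $\mathcal{P}\cap\mathcal{N}(\theta_0)$. Applying this at both $\theta^{(l)}_1$ and $\theta^{(l)}_2$ yields $\rvn(\theta^{(l)}_1) = \rvn(\theta^{(l)}_2)$ after fixing the sign convention as above.

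The only small subtlety is the factor $\sigma'(\tilde{\alpha}^{(l)}_i)$, which could in principle vanish or change sign; however, since $\sigma$ is smooth and injective, it is strictly monotonic, and for the standard activations listed after Eq.~(\ref{eq:generalnn}) (sigmoid, tanh, softplus) the derivative is in fact strictly positive everywhere, so $\sign(\sigma')$ is a fixed constant and the sign convention is automatically consistent. In the general injective-smooth case one can either restrict to the open set where $\sigma'\neq 0$ or fall back on the purely geometric argument via parallelism of the level sets in Theorem \ref{thm:linear_level_sets}, which does not require $\sigma'$ to be nonzero at all. This is the only potential obstacle, and it is handled by either route.
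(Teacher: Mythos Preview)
Your proof is correct and follows the paper's intended route: the paper states this as an immediate consequence of Theorem~\ref{thm:linear_level_sets} (parallel hyperplanes share a common normal) without giving further detail, and your first paragraph is exactly that argument. The explicit gradient computation via the chain rule that you add is a natural and helpful way to make the geometric statement concrete, and it anticipates the paper's own later use of the identification $\rvn_i \parallel \nabla \alpha^{(l)}_i$; your handling of the sign issue is also more careful than the paper's, which simply declares the normal ``unique (up to a negative sign)'' and moves on.
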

Hence, we can define $\rvn_i$ as the normal vector for each neuron $\alpha^{(l)}_i$. The next property is about the set of normal vectors $\{\rvn_i\}_{i=1}^{m_l}$.

\noindent {\bf Property 2: Orthogonality of normal vectors $\rvn_i$.} Now, let's look at the directions of the normal vectors $\rvn_i$.
Note that the neuron $\alpha^{(l)}_i$ does not depend on the parameters $\rvw_j^{(l)}$ for any $j\ne i$, hence $\pi_j(\rvn_i)=\mathbf{0}$ for  all $j\notin \{i,0\}$. That means:
\begin{prop}\label{prop:decomposition}
For all $i\in[m_l]$, the normal vector $\rvn_i$ resides in the sub-space $\mathcal{P}_i\oplus \mathcal{P}_0$, and can be decomposed as $\rvn_i = \pi_i(\rvn_i) + \pi_0(\rvn_i)$.
\end{prop}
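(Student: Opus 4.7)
The plan is to identify the normal vector $\rvn_i$ with the (normalized) gradient of $\alpha^{(l)}_i$ in parameter space, and then observe that this gradient is automatically zero in every private subspace $\mathcal{P}_j$ with $j\notin\{0,i\}$, because the neuron $\alpha^{(l)}_i$ simply does not depend on those parameters.

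First I would invoke Theorem~\ref{thm:linear_level_sets}: in the $O(1)$-neighborhood $\mathcal{N}(\theta_0)$, the level sets of $\alpha^{(l)}_i$ are parallel affine hyperplanes of codimension $1$ (the codimension-$1$ property uses that $\sigma$ is injective and that $\alpha^{(l)}_i$ is scalar-valued). Hence at every point of any non-empty level set the unit normal is unique up to sign and, by the parallelism across level sets, constant throughout $\mathcal{N}(\theta_0)$. The standard fact that the normal to a level set is collinear with the gradient therefore gives $\rvn_i = \nabla_{\theta^{(l)}}\alpha^{(l)}_i / \|\nabla_{\theta^{(l)}}\alpha^{(l)}_i\|$ at any $\theta^{(l)}\in\mathcal{N}(\theta_0)$ where the gradient is non-zero.

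Next I would unpack the parameter dependence. By the recursive definition \eqref{eq:generalnn}, $\alpha^{(l)}_i = \sigma(\tilde\alpha^{(l)}_i)$ with $\tilde\alpha^{(l)}_i = \tfrac{1}{\sqrt{m_{l-1}}}(\rvw^{(l)}_i)^T \alpha^{(l-1)}$, and $\alpha^{(l-1)}$ depends only on $\theta^{(l-1)}$. Therefore $\theta(\alpha^{(l)}_i) = \theta^{(l-1)} \cup \rvw^{(l)}_i$, i.e.\ the coordinates of $\alpha^{(l)}_i$ lie entirely in $\mathcal{P}_0 \cup \mathcal{P}_i$. For any $j\in[m_l]$ with $j\ne i$, every coordinate spanning $\mathcal{P}_j$ is a component of $\rvw^{(l)}_j$, which does not enter the expression for $\alpha^{(l)}_i$, so the corresponding partial derivatives vanish identically on $\mathcal{N}(\theta_0)$. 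Consequently $\pi_j(\nabla_{\theta^{(l)}}\alpha^{(l)}_i) = \vzero$, and by linearity of $\pi_j$ also $\pi_j(\rvn_i) = \vzero$ for every $j\notin\{0,i\}$.

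Finally, using the completeness property $\sum_{j=0}^{m_l}\pi_j(\rvz) = \rvz$ of the projection system on any $\rvz\in\mathcal{P}$, and applying it to $\rvz=\rvn_i$, all summands with $j\notin\{0,i\}$ drop out and we obtain $\rvn_i = \pi_0(\rvn_i) + \pi_i(\rvn_i)$, placing $\rvn_i$ in the subspace $\mathcal{P}_i\oplus\mathcal{P}_0$. The only subtle point in this argument is the identification of $\rvn_i$ with the gradient direction; everything else is a direct consequence of the fact that private parameters of one neuron do not appear in the formula for another. There is no genuine obstacle, as the proposition is really a structural, bookkeeping statement extracted from the induction hypothesis plus the explicit form of the network.
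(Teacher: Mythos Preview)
Your proposal is correct and follows exactly the paper's approach: the paper's entire argument is the single sentence preceding the proposition, namely that $\alpha^{(l)}_i$ does not depend on $\rvw^{(l)}_j$ for $j\ne i$, so $\pi_j(\rvn_i)=\mathbf{0}$ for $j\notin\{0,i\}$. You have simply spelled out in more detail the identification of $\rvn_i$ with the gradient direction and the use of the projection completeness $\sum_j \pi_j = \mathrm{id}$, which the paper leaves implicit.
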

As $\pi_i(\rvn_i)\in \mathcal{P}_i$, and $\mathcal{P}_i \cap \mathcal{P}_j = \{\mathbf{0}\}$ for $i\ne j$, the components $\{\pi_i(\rvn_i)\}_{i=1}^{m_l}$ are orthogonal to each other:
\begin{equation}\label{eq:perpendicular}
    \pi_i(\rvn_i) \perp \pi_j(\rvn_j), \quad for \ all \ i\ne j \in [m_l].
\end{equation}
By Proposition \ref{prop:decomposition}, to show the orthogonality of the normal vectors $\{\rvn_i\}_{i=1}^{m_l}$, it suffices to show the orthogonality of $\{\pi_0(\rvn_i)\}_{i=1}^{m_l}$.

Since it is perpendicular to the corresponding level sets, the normal vector $\rvn_i$ is parallel to the gradient $\nabla \alpha^{(l)}_i$, up to a potential negative sign. Similarly, the projection $\pi_0(\rvn_i)$ is parallel to the partial gradient $\nabla_{\theta^{(l-1)}}\tilde{\alpha}^{(l)}_i$. 

By the definition of neurons in Eq.(\ref{eq:generalnn}) and the constancy of $\rvn_i$ in the neighborhood $\mathcal{N}(\theta_0)$, we have
\begin{equation}\label{eq:partial_gradient}
    (\nabla_{\theta^{(l-1)}}\tilde{\alpha}^{(l)}_i)^T = \frac{1}{\sqrt{m_{l-1}}}(\rvw^{(l)}_{i,0})^T\nabla{\alpha}^{(l-1)}.
\end{equation}
Here, because ${\alpha}^{(l-1)}$ is an $m_l$-dimensional vector, $\nabla{\alpha}^{(l-1)}$ is an $m_l\times p^{(l-1)}$ matrix, where $p^{(l-1)}$ denotes the size of $\theta^{(l-1)}$. It is important to note that the matrix $\nabla{\alpha}^{(l-1)}$ is shared by all the neurons in layer $l$ and is independent of the index $i$, while the vector $\rvw^{(l)}_{i,0}$, which is $\rvw^{(l)}_{i}$ at initialization, is totally private to the $l$-th layer neurons. 

Recall that the vectors $\{\rvw^{(l)}_{i,0}\}_{i=1}^{m_l}$ are independently drawn from $\mathcal{N}(\mathbf{0},I_{m_{l-1}\times m_{l-1}})$.
As is well-known, when the vector dimension $m_{l-1}$ is large, these independent random vectors $\{\rvw^{(l)}_{i,0}\}$ are nearly orthogonal. When in the infinite width limit, the orthogonality holds with probability $1$:
\begin{equation}
    \forall \epsilon>0, \quad \lim_{m_{l-1}\to \infty}\mathbb{P}\left(\frac{1}{m_{l-1}}|(\rvw^{(l)}_{i,0})^T\rvw^{(l)}_{j,0}|\ge \epsilon\right) = 0, \ \ for \ all \ i\ne j\in[m_l].
\end{equation}
From Eq.(\ref{eq:partial_gradient}), we see that the partial gradients $\{\nabla_{\theta^{(l-1)}}\tilde{\alpha}^{(l)}_i\}_{i=1}^{m_l}$ are in fact the result of applying a universal linear transform, i.e., $\nabla{\alpha}^{(l-1)}$, onto a set of nearly orthogonal vectors. The following lemma shows that the vectors remain orthogonal even after this linear transformation.
\begin{lemma}\label{lemma:orthor_partial_grad}
\begin{equation}
\forall \epsilon>0, \quad \lim_{m_{l-1}\to \infty}\mathbb{P}\left(\left|(\nabla_{\theta^{(l-1)}}\tilde{\alpha}^{(l)}_i)^T\nabla_{\theta^{(l-1)}}\tilde{\alpha}^{(l)}_j\right|\ge \epsilon\right) = 0, \ \  for \ all \ i\ne j\in[m_l].
\end{equation}
\end{lemma}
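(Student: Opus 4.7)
The plan is to express the inner product as a bilinear form in two independent Gaussian vectors, condition on everything coming from earlier layers, and apply Chebyshev's inequality.

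Substituting Eq.~(\ref{eq:partial_gradient}) gives
\begin{equation*}
(\nabla_{\theta^{(l-1)}}\tilde{\alpha}^{(l)}_i)^T \nabla_{\theta^{(l-1)}}\tilde{\alpha}^{(l)}_j \;=\; \frac{1}{m_{l-1}}(\rvw^{(l)}_{i,0})^{T} K\, \rvw^{(l)}_{j,0},
\end{equation*}
where $K := \nabla\alpha^{(l-1)}(\nabla\alpha^{(l-1)})^{T} \in \mathbb{R}^{m_{l-1}\times m_{l-1}}$. The key structural observation is that $K$ depends only on $\theta^{(l-1)}$ and is therefore statistically independent of $\rvw^{(l)}_{i,0}$ and $\rvw^{(l)}_{j,0}$, which are themselves independent standard Gaussians for $i\neq j$.

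Conditioning on $K$, the bilinear form $(\rvw^{(l)}_{i,0})^{T} K \rvw^{(l)}_{j,0}$ has mean zero and conditional variance $\|K\|_F^2$, via the elementary identity $\mathbb{E}[(\rva^{T} K \rvb)^2] = \|K\|_F^2$ for independent standard normal $\rva,\rvb$. Chebyshev's inequality then yields
\begin{equation*}
\mathbb{P}\!\left(\Big|\tfrac{1}{m_{l-1}}(\rvw^{(l)}_{i,0})^{T} K \rvw^{(l)}_{j,0}\Big| \ge \epsilon \,\Big|\, K\right) \;\le\; \frac{\|K\|_F^2}{m_{l-1}^2\, \epsilon^2},
\end{equation*}
and applying the law of total probability reduces the lemma to showing $\|K\|_F^2 / m_{l-1}^2 \to 0$ in probability as $m_{l-1}\to\infty$.

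To bound $\|K\|_F^2$, I would split into diagonal and off-diagonal contributions,
\begin{equation*}
\|K\|_F^2 \;=\; \sum_{k=1}^{m_{l-1}}\|\nabla\alpha^{(l-1)}_k\|^4 \;+\; \sum_{k\neq k'}\big\langle \nabla\alpha^{(l-1)}_k,\, \nabla\alpha^{(l-1)}_{k'}\big\rangle^{2}.
\end{equation*}
Using the chain rule $\nabla\alpha^{(l-1)}_k = \sigma'(\tilde\alpha^{(l-1)}_k)\,\nabla\tilde\alpha^{(l-1)}_k$ together with the induction hypothesis (Assumption~\ref{assu:induction_hypo}), the gradient of each pre-activation at layer $l-1$ is approximately constant and $O(1)$, so the diagonal sum is $O(m_{l-1})$. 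Each off-diagonal term reduces to $\sigma'(\tilde\alpha^{(l-1)}_k)\,\sigma'(\tilde\alpha^{(l-1)}_{k'})\,\langle \nabla\tilde\alpha^{(l-1)}_k,\nabla\tilde\alpha^{(l-1)}_{k'}\rangle$, and the inner product of pre-activation gradients at layer $l-1$ vanishes in probability in the sequential limit $m_1,\ldots,m_{l-2}\to\infty$ by the same lemma applied one layer down — this is precisely the induction engine.

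The main obstacle is controlling this off-diagonal sum in an \emph{aggregated} way: the previous inductive step only provides smallness of each individual inner product, whereas the bound we need forces the $\binom{m_{l-1}}{2}$ squared entries to be small on average. I would handle this by promoting the pointwise convergence-in-probability of each off-diagonal entry to convergence of second moments, so that $\mathbb{E}[\langle \nabla\tilde\alpha^{(l-1)}_k,\nabla\tilde\alpha^{(l-1)}_{k'}\rangle^2]\to 0$ in the sequential limit; uniform integrability is not problematic here because the gradients are polynomial expressions in jointly Gaussian weights with variances controlled by the preceding inductive steps. Summing the resulting second-moment bounds yields $\mathbb{E}[\|K\|_F^2] = o(m_{l-1}^2)$, and a final application of Markov's inequality closes the argument.
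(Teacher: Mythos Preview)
Your setup is the same as the paper's: both write the inner product as $\frac{1}{m_{l-1}}(\rvw^{(l)}_{i,0})^{T} K\,\rvw^{(l)}_{j,0}$ with $K=\nabla\alpha^{(l-1)}(\nabla\alpha^{(l-1)})^{T}$ and exploit independence of $\rvw^{(l)}_{i,0},\rvw^{(l)}_{j,0}$ from $K$. The divergence is at the variance step. The paper conditions on $\rva_j:=K\rvw^{(l)}_{j,0}$, so the remaining quantity is exactly Gaussian with variance $\|\rva_j\|^2/m_{l-1}^2$, and then bounds $\|\rva_j\|^2\le \|K\|^{2}\|\rvw^{(l)}_{j,0}\|^{2}$ via a \emph{direct spectral-norm estimate} $\|K\|\le C$ (their Lemma~\ref{lemma:first_order}). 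That lemma is proved by expanding $K$ through the chain rule and bounding each factor $\|\partial\alpha^{(l')}/\partial W^{(l')}\|$ and $\|\partial\alpha^{(l'')}/\partial\alpha^{(l''-1)}\|$ using only $\|W^{(l)}_0\|\le 3\sqrt{m}$ and Lipschitzness of $\sigma$. There is no induction on the orthogonality lemma itself; the variance comes out as $5C^{2}/m_{l-1}$ in one shot.

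Your route instead targets $\|K\|_F^{2}/m_{l-1}^{2}$, and the off-diagonal piece forces you to invoke the lemma one layer down and then aggregate $\binom{m_{l-1}}{2}$ vanishing terms, which in turn requires strengthening the induction hypothesis to second moments and appealing to uniform integrability. This can be carried through (exchangeability makes the aggregation uniform over pairs, and the base case is exact since first-layer neurons share no parameters), and it has the appeal of making the orthogonality propagate structurally through the assembly hierarchy. But it is considerably more delicate than the paper's argument. Observe also that the paper's spectral bound already yields $\|K\|_F^{2}\le m_{l-1}\|K\|^{2}\le C^{2}m_{l-1}$, which would short-circuit your diagonal/off-diagonal split entirely; the direct route is both simpler and sharper.
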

\noindent See the proof in Appendix~\ref{secapp:b}.

Recalling Eq.(\ref{eq:perpendicular}) and the fact that the normal vectors $\pi_0(\rvn_i)$ are parallel to the gradients $\nabla_{\theta^{(l-1)}} \tilde{\alpha}^{(l)}$, we immediately have that these normal vectors are orthogonal with probability $1$, in the limit of $m_{l-1}\to\infty$, as stated in 
the following theorem.
\begin{thm}[Orthogonality of normal vectors]\label{thm:ortho_normal_vec}
\begin{equation}
\forall \epsilon>0, \quad \lim_{m_{l-1}\to \infty}\mathbb{P}\left(\left|\rvn_i^T\rvn_j\right|\ge \epsilon\right) = 0, \ \  for \ all \ i\ne j\in[m_l].
\end{equation}
\end{thm}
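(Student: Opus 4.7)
The plan is to reduce the claim to Lemma~\ref{lemma:orthor_partial_grad} via the orthogonal decomposition of $\mathcal{P}$. First I would invoke Proposition~\ref{prop:decomposition} to write $\rvn_i = \pi_0(\rvn_i) + \pi_i(\rvn_i)$ and similarly for $\rvn_j$, then expand
\[
\rvn_i^T \rvn_j = \pi_0(\rvn_i)^T \pi_0(\rvn_j) + \pi_i(\rvn_i)^T \pi_j(\rvn_j) + \pi_i(\rvn_i)^T \pi_0(\rvn_j) + \pi_0(\rvn_i)^T \pi_j(\rvn_j).
\]
The second term vanishes by Eq.~(\ref{eq:perpendicular}); the last two vanish because, from the direct-sum definition $\mathcal{P} = \bigoplus_{k=0}^{m_l} \mathcal{P}_k$, the private subspaces $\mathcal{P}_i,\mathcal{P}_j$ are orthogonal to the shared subspace $\mathcal{P}_0$. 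Hence $\rvn_i^T \rvn_j = \pi_0(\rvn_i)^T \pi_0(\rvn_j)$.

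Next I would rewrite $\pi_0(\rvn_i)$ in terms of the pre-activation gradient. Since $\rvn_i$ is the unit normal to the hyperplane level sets of $\alpha^{(l)}_i$ (Theorem~\ref{thm:linear_level_sets}), it is parallel up to sign to $\nabla \alpha^{(l)}_i$, so $\pi_0(\rvn_i) = \pm \nabla_{\theta^{(l-1)}} \alpha^{(l)}_i / \|\nabla \alpha^{(l)}_i\|$. The chain rule gives $\nabla_{\theta^{(l-1)}} \alpha^{(l)}_i = \sigma'(\tilde{\alpha}^{(l)}_i)\, \nabla_{\theta^{(l-1)}} \tilde{\alpha}^{(l)}_i$, so
\[
\rvn_i^T \rvn_j = \pm \frac{\sigma'(\tilde{\alpha}^{(l)}_i) \sigma'(\tilde{\alpha}^{(l)}_j)}{\|\nabla \alpha^{(l)}_i\| \|\nabla \alpha^{(l)}_j\|} \, \bigl(\nabla_{\theta^{(l-1)}} \tilde{\alpha}^{(l)}_i\bigr)^T \nabla_{\theta^{(l-1)}} \tilde{\alpha}^{(l)}_j.
\]
To control the prefactor I would use the trivial lower bound $\|\nabla \alpha^{(l)}_i\|^2 \ge \|\nabla_{\rvw^{(l)}_i} \alpha^{(l)}_i\|^2 = |\sigma'(\tilde{\alpha}^{(l)}_i)|^2 \|\alpha^{(l-1)}\|^2 / m_{l-1}$, giving $|\sigma'(\tilde{\alpha}^{(l)}_i)|/\|\nabla \alpha^{(l)}_i\| \le \sqrt{m_{l-1}}/\|\alpha^{(l-1)}\|$. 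By a standard concentration argument (law of large numbers applied to the entries of $\alpha^{(l-1)}$, whose statistics are determined by the inductively near-linear previous layers), $\|\alpha^{(l-1)}\|^2/m_{l-1}$ converges in probability to a positive constant, so the prefactor is $O(1)$ with probability tending to $1$. Combined with Lemma~\ref{lemma:orthor_partial_grad} and a union bound over the vanishing exceptional events, this yields $|\rvn_i^T\rvn_j|\to 0$ in probability.

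The main obstacle is the prefactor step: rigorously showing that $\|\alpha^{(l-1)}\|^2/m_{l-1}$ is bounded away from zero with high probability requires propagating concentration through the previous $l-1$ layers, and one must also rule out vanishing of $\sigma'$. For the activations explicitly admitted in the setup (\emph{sigmoid}, \emph{tanh}, \emph{softplus}) the derivative is strictly positive, so the only genuinely technical piece is the concentration of $\|\alpha^{(l-1)}\|$, which is a standard companion fact in the NTK infinite-width regime. Once that is in place, everything else is bookkeeping: the algebraic reduction to Lemma~\ref{lemma:orthor_partial_grad} is immediate, exactly as the authors remark.
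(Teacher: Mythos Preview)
Your proposal is correct and follows the same route as the paper: decompose $\rvn_i$ via Proposition~\ref{prop:decomposition}, drop the private--private and private--shared cross terms using Eq.~(\ref{eq:perpendicular}) and the direct-sum orthogonality, identify $\pi_0(\rvn_i)$ with a scalar multiple of $\nabla_{\theta^{(l-1)}}\tilde{\alpha}^{(l)}_i$, and then invoke Lemma~\ref{lemma:orthor_partial_grad}. Your explicit control of the normalization prefactor via the private-weight component $\|\nabla_{\rvw^{(l)}_i}\alpha^{(l)}_i\|$ and the concentration of $\|\alpha^{(l-1)}\|^2/m_{l-1}$ is in fact more careful than the paper, which simply states the conclusion follows ``immediately'' from the parallelism and Lemma~\ref{lemma:orthor_partial_grad} without isolating this step.
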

\begin{remark}
As seen in Section~\ref{sec:independent_models}, a two-layer neural network Eq.(\ref{eq:two-layer_nn}) is an assembly model with independent sub-models. The normal vectors $\{\rvn_i\}_{i=1}^{m_l}$ are exactly orthogonal to each other, even for finite hidden layer width. 
\end{remark}
\begin{wrapfigure}[15]{r}{6.5cm}
  \vspace{-28pt}
\begin{center}
  \includegraphics[width=0.45\textwidth]{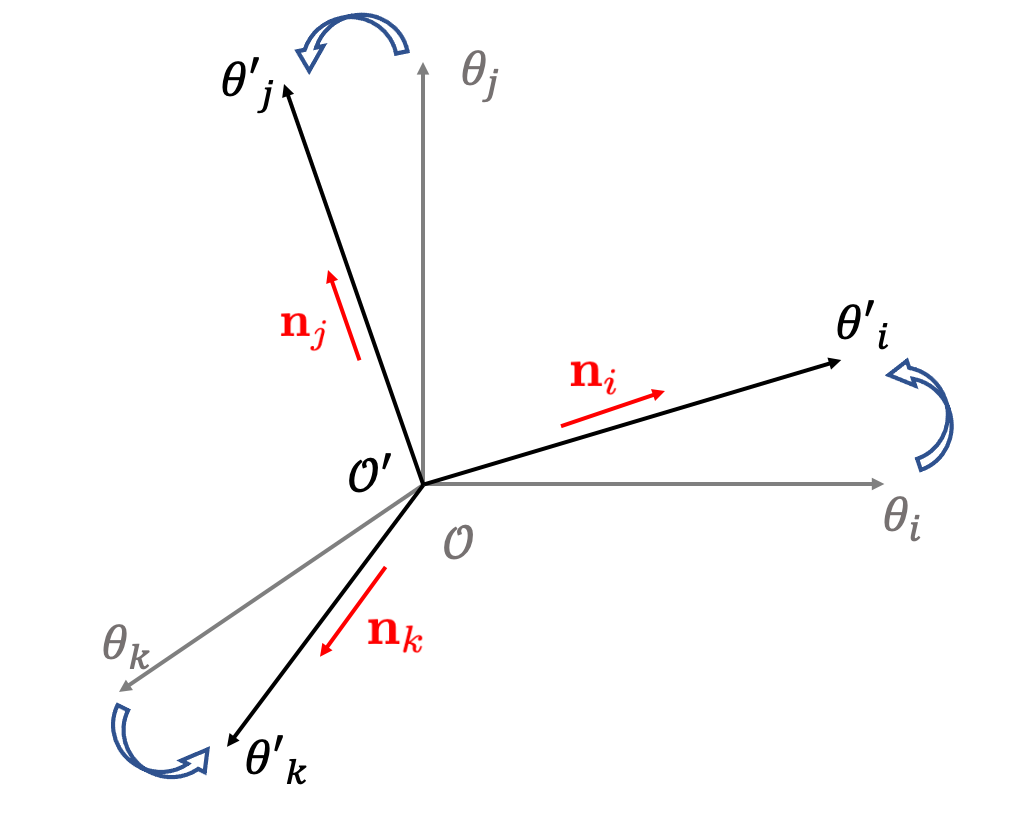} 
\end{center}
\vspace{-10pt}
\caption{Coordinate systems: $\mathcal{O}'$ (new) vs. $\mathcal{O}$ (old). normal vectors $\rvn_i$ and gradients $\nabla \alpha_i$  are along an axis of $\mathcal{O}'$.}
\vspace{-36pt} 
\label{fig:coordinate_system}
\end{wrapfigure}
See Appendix~\ref{secapp:experiment} for an numerical verification of this orthogonality. 
This orthogonality allows the existence of a new coordinate system such that all the normal vectors are along the axes. Specifically, in the old coordinate system $\mathcal{O}$, each axis is along an individual neural network parameter $\theta_i \in \theta$ with $i\in[p]$. After an appropriate rotation, $\mathcal{O}$ can be transformed to a new coordinate system $\mathcal{O}'$ such that each normal vector $n_i$ is parallel to one of the new axes. See Figure \ref{fig:coordinate_system} for an illustration.  Denote $\theta'$ as the set of new parameters that are long the axes of new coordinate system $\mathcal{O}'$: $\theta' = \mathcal{R}\theta$, where $\mathcal{R}$ is a rotation operator. 

The interesting observation is that each neuron $\alpha^{(l)}_i$ essentially depends on only one new parameter $\theta'_i \in \theta'$, because its gradient direction is parallel with one normal vector $\rvn_i$ and $\rvn_i$ is along one axis in the new coordinate system $\mathcal{O}'$. Moreover, different neurons depends on different new parameters, as normal vectors are never parallel. Hence, these neurons are essentially independent to each other.
This view actually allows us to use the analysis for assembling independent models in Section~\ref{sec:independent_models} to show the $O(1)$-neighborhood linearity at layer $l+1$, as follows.

\noindent{\bf Linearity of the pre-activations in layer $l+1$.} Having the properties for neurons in layer $l$ discussed above, we are now ready to analyze the pre-activation neurons $\tilde{\alpha}^{(l+1)}$ in layer $l+1$ as assembly models. Recall that each pre-activation $\tilde{\alpha}^{(l+1)}$ is defined as:
\begin{equation}\label{eq:alpha_l+1}
    \tilde{\alpha}^{(l+1)} = \frac{1}{\sqrt{m_l}}\sum_{i=1}^{m_l}w_j^{(l+1)}\alpha^{(l)}_j.
\end{equation}
Without ambiguity, we omitted the index $i$ for the pre-activation $\tilde{\alpha}^{(l+1)}$ and the weights $\rvw^{(l+1)}$.

We want to derive the quadratic term (i.e., the Lagrange remainder term) of $\tilde{\alpha}^{(l+1)}$ in its Taylor expansion and to show it is arbitrarily small, in the $O(1)$-neighborhood $\mathcal{N}(\theta_0)$. 
First, let's consider the special case where the parameters $\rvw^{(l+1)}$ are fixed, i.e., $\rvw^{(l+1)} = \rvw^{(l+1)}_0$, and $\tilde{\alpha}^{(l+1)}$ only depends on $\theta^{(l)}$. This case conveys the key concepts of the $O(1)$-neighborhood linearity after assembling. We will relax this constraint in Appendix~\ref{secapp:relax_const}. 

Consider an arbitrary parameter setting $\theta\in \mathcal{N}(\theta_0)$, and let $R:=\|\theta-\theta_0\|=O(1)$. By the definition of assembly model $\tilde{\alpha}^{(l+1)}$ in Eq.(\ref{eq:alpha_l+1}), the quadratic term in its Taylor expansion Eq.(\ref{eq:lagrange}) can be decomposed as:
\begin{eqnarray}
    \frac{1}{2}(\theta-\theta_0)^TH_{\tilde{\alpha}^{(l+1)}}(\xi)(\theta-\theta_0) =\frac{1}{\sqrt{m_l}}\sum_{i=1}^{m_l}w_{i,0}^{(l+1)}\left[\frac{1}{2}(\theta-\theta_0)^TH_{{\alpha}^{(l)}_i} (\xi)(\theta - \theta_0)\right].\label{eq:non-linear_sum}
\end{eqnarray}
where $H_{\tilde{\alpha}^{(l+1)}}=\frac{\partial^2 \tilde{\alpha}^{(l+1)}}{\partial\theta^2}$ and $H_{{\alpha}^{(l)}_i}=\frac{\partial^2 {\alpha}^{(l)}_i}{\partial\theta^2}$ are the Hessians of $\tilde{\alpha}^{(l+1)}$ and ${\alpha}^{(l)}_i$, respectively, and $\xi\in \mathcal{P}$ is on the line segment between $\theta^{(l)}$ and $\theta^{(l)}_0$.

We bounded the term in the square bracket using a treatment analogous to Theorem \ref{thm:independent}. First, as seen in Property 1, the level sets of ${\alpha}^{(l)}_i$ are hyper-planes with co-dimension $1$, perpendicular to $\rvn_i$. Then the value of  ${\alpha}^{(l)}_i$ only depends on the component $(\theta-\theta_0)^T\rvn_i$, and Hessian $H_{{\alpha}^{(l)}_i}$ is rank $1$ and can be written as $H_{{\alpha}^{(l)}_i} = c \rvn_i \rvn_i^T$, with some constant $c\le \beta$.
Hence, we have
    \begin{equation}
        \left|(\theta-\theta_0)^TH_{{\alpha}^{(l)}_i} (\xi)(\theta - \theta_0)\right| = c\left((\theta-\theta_0)^T\rvn_i\right)^2 \le \beta \left((\theta-\theta_0)^T\rvn_i\right)^2.\nonumber
    \end{equation}
    Here $\beta$ is the smoothness of the sub-model, i.e., post-activation $\alpha^{(l)}$. By Eq.(\ref{eq:non-linear_sum}), we have:
    \begin{equation}
        \frac{1}{2}\left|(\theta-\theta_0)^TH_{\tilde{\alpha}^{(l+1)}}(\xi)(\theta-\theta_0)\right| \le \frac{\beta}{2\sqrt{m_l}} \max(|w_{i,0}^{(l+1)}|)\sum_{i=1}^{m_l}\left((\theta-\theta_0)^T \rvn_i\right)^2.
    \end{equation}
Second, by the orthogonality of normal vectors as in Theorem \ref{thm:ortho_normal_vec},
    we have
    \begin{equation}
        \sum_{i=1}^{m_l}\left((\theta-\theta_0)^T\rvn_i\right)^2 \le \|\theta-\theta_0\|^2 = R^2.
    \end{equation}
Combining the above two equations, we obtain the following theorem which upper bound the magnitude of the quadratic term of $\tilde{\alpha}^{(l+1)}$:
\begin{thm}[Bounding the quadratic term]\label{thm:bound_non_linear}
Assume that the parameters in layer $l+1$ are fixed to the initialization. For any parameter setting $\theta\in \mathcal{N}(\theta_0)$, 
\begin{equation}
    \frac{1}{2}\left|(\theta-\theta_0)^TH_{\tilde{\alpha}^{(l+1)}}(\xi)(\theta-\theta_0)\right| \le \frac{\beta R}{2\sqrt{m_l}} \max(|w_{i,0}^{(l+1)}|).
\end{equation}

\end{thm}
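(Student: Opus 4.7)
The plan is to follow the blueprint of Theorem~\ref{thm:independent} but to use Property~1 and Property~2 to handle the fact that the sub-models $\alpha^{(l)}_i$ share the common parameter block $\theta^{(l-1)}$. First, since Eq.(\ref{eq:alpha_l+1}) is a linear combination of the $\alpha^{(l)}_i$ with the weights $w^{(l+1)}_i$ held fixed at their initial values, differentiating twice in $\theta = \theta^{(l)}$ gives $H_{\tilde{\alpha}^{(l+1)}} = \frac{1}{\sqrt{m_l}}\sum_{i=1}^{m_l} w^{(l+1)}_{i,0}\,H_{\alpha^{(l)}_i}$; contracting with $(\theta-\theta_0)$ on both sides recovers Eq.(\ref{eq:non-linear_sum}) and reduces the theorem to controlling $\sum_i \bigl|(\theta-\theta_0)^T H_{\alpha^{(l)}_i}(\xi)(\theta-\theta_0)\bigr|$.

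Next I would treat each sub-model. By Theorem~\ref{thm:linear_level_sets} and its corollary, on the convex $O(1)$-neighborhood $\mathcal{N}(\theta_0)$ the function $\alpha^{(l)}_i$ depends on $\theta^{(l)}$ only through the scalar $\rvn_i^T\theta^{(l)}$; combined with injectivity of $\sigma$ this yields a factorization $\alpha^{(l)}_i(\theta) = \phi_i(\rvn_i^T\theta)$ for some twice-differentiable $\phi_i$. Differentiating twice produces the rank-one Hessian $H_{\alpha^{(l)}_i}(\xi) = \phi_i''(\rvn_i^T\xi)\,\rvn_i\rvn_i^T$, and the $\beta$-smoothness of the sub-models forces $|\phi_i''|\le\beta$, so $\bigl|(\theta-\theta_0)^T H_{\alpha^{(l)}_i}(\xi)(\theta-\theta_0)\bigr|\le\beta\bigl((\theta-\theta_0)^T\rvn_i\bigr)^2$. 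Factoring $\max_i|w^{(l+1)}_{i,0}|$ out of the assembled sum and applying Bessel's inequality to the orthonormal system $\{\rvn_i\}_{i=1}^{m_l}$, which is orthonormal with probability one in the infinite-width limit by Theorem~\ref{thm:ortho_normal_vec}, yields
\begin{equation*}
\sum_{i=1}^{m_l}\bigl((\theta-\theta_0)^T\rvn_i\bigr)^2 \;\le\; \|\theta-\theta_0\|^2 \;=\; R^2,
\end{equation*}
and chaining the three estimates produces the claimed bound.

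The main subtlety I would expect to guard carefully is the one-variable factorization $\alpha^{(l)}_i(\theta) = \phi_i(\rvn_i^T\theta)$ in the second step, since the whole rank-one argument, and thus the decisive single-direction bound, hinges on it. That factorization rests on three ingredients: the constancy of $\rvn_i$ throughout $\mathcal{N}(\theta_0)$ supplied by the corollary following Theorem~\ref{thm:linear_level_sets}, convexity of the neighborhood so that every point can be joined to $\theta_0$ within a common level-set foliation, and injectivity of $\sigma$ needed to transfer the parametrization from pre-activations to post-activations. Once this structural reduction is in hand, the remaining calculation is routine linear algebra and amounts to the Pythagorean collapse supplied by Theorem~\ref{thm:ortho_normal_vec}.
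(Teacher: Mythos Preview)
Your proposal is correct and follows essentially the same approach as the paper: decompose the assembly Hessian via Eq.(\ref{eq:non-linear_sum}), use Property~1 to obtain the rank-one form $H_{\alpha^{(l)}_i}=c\,\rvn_i\rvn_i^T$ with $|c|\le\beta$, then invoke the orthogonality of Theorem~\ref{thm:ortho_normal_vec} to collapse $\sum_i((\theta-\theta_0)^T\rvn_i)^2$ to $\|\theta-\theta_0\|^2=R^2$. Your explicit one-variable factorization $\alpha^{(l)}_i(\theta)=\phi_i(\rvn_i^T\theta)$ makes the rank-one step more transparent than the paper's one-line assertion, but the logic is identical; note also that both your chain and the paper's own derivation naturally produce $\beta R^2/(2\sqrt{m_l})$ rather than the $\beta R/(2\sqrt{m_l})$ printed in the theorem statement.
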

For the Gaussian random initialization, the above upper bound is of the order $O(\log(m_l)/\sqrt{m_l})$. Hence, as  $m_l\to\infty$, the quadratic term of pre-activation $\tilde{\alpha}^{(l+1)}$ in the $(l+1)$-th layer vanishes, and the function $\tilde{\alpha}^{(l+1)}$ becomes linear.

\paragraph{Concluding the induction analysis.}
Applying the analysis in Section~\ref{sec:l_to_l1}, Theorem \ref{thm:linear_level_sets} - \ref{thm:bound_non_linear},  with a standard induction argument, we conclude that the neural network, as well as each of its hidden pre-activation neurons, is $O(1)$-neighborhood linear, in the infinite network width limit, $m_1,\ldots, m_{L-1}\to \infty$. 

\paragraph{Extension to other architectures.} In Appendix \ref{secapp:densenet}, we further show that the assembly analysis and the $O(1)$-neighborhood linearity also hold for DenseNet~\citep{huang2017densely}.

\section{Conclusion and Future directions}
In this work, we viewed a wide fully-connected neural network as a hierarchy of assembly models. Each assembly model corresponds to  a pre-activation neuron and is linearly assembled from a set of sub-models, the post-activation neurons from the previous layer. When the network width increases to infinity, we observed that the neurons within the same hidden layer become essentially independent. With this property, we shown that the network is linear in an $O(1)$-neighborhood around the network initialization.

We believe the assembly analysis and the principles we identified, especially the essential independence of sub-models, and their iterative construction,  are significantly more general than the specific structures we considered in this work, and, for example, apply to a broad range of neural architectures.  In future work, we aim to apply the analysis to  general feed-forward neural networks, such as  architectures with arbitrary connections that form an acyclic graph.

\section*{Acknowledgements}

We are grateful for the support of
the NSF and the Simons Foundation for the Collaboration on the Theoretical Foundations of Deep Learning\footnote{\url{https://deepfoundations.ai/}}
through awards DMS-2031883 and \#814639. We also acknowledge NSF support through  IIS-1815697 and the TILOS institute (NSF CCF-2112665).

\bibliography{iclr2022_conference}
\bibliographystyle{iclr2022_conference}

\newpage
\appendix

\section{Random sub-model weights with Gaussian distribution}\label{appsec:a}
Let's consider the case where the weights $v_i$ of the sub-models are randomly drawing from Gaussian distributions, for example the normal distribution $\mathcal{N}(0,1)$. That is
\begin{equation}
    f(\theta;\rvx) = \frac{1}{s(m)}\sum_{i=1}^m v_i g_i(\rvw_i;\rvx),
\end{equation}
where $v_i$'s are randomly drawn from $\mathcal{N}(0,1)$.

In this case, there is no strict upper bounded on the absolute values of weights $|v_i|$. But with high probability, we can still bound them using the following lemma.

\begin{lemma}\label{lemma:inf_v}
Let $v_1,...,v_m$ be i.i.d. Gaussian variables. Then with probability at least $1-2e^{-0.5\log^2 m+\log m}$,
\begin{align*}
    \max_{i\in[m]}|v_i| \leq \log m.
\end{align*}
\end{lemma}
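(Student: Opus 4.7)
The plan is to obtain the bound by a direct combination of a one-sided Gaussian tail estimate and the union bound. The setup is standard: for a single standard Gaussian $v \sim \mathcal{N}(0,1)$, we have the well-known tail estimate $\mathbb{P}(|v|\ge t) \le 2 e^{-t^2/2}$ for all $t>0$, which follows from the Mills-ratio type inequality (or, more simply, from $\int_t^\infty e^{-x^2/2}dx \le \int_t^\infty (x/t) e^{-x^2/2} dx = e^{-t^2/2}/t \le e^{-t^2/2}$ for $t\ge 1$, together with the symmetry of the Gaussian). I would quote this as a standard fact rather than re-derive it.

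First, I would apply a union bound over the $m$ i.i.d. variables $v_1,\dots,v_m$ to get
\[
\mathbb{P}\!\left(\max_{i\in[m]} |v_i| \ge t\right) \le \sum_{i=1}^m \mathbb{P}(|v_i|\ge t) \le 2m\, e^{-t^2/2}.
\]
Next, I would plug in the specific choice $t = \log m$, which turns the right-hand side into
\[
2m\, e^{-\log^2 m / 2} = 2\, e^{\log m - 0.5\log^2 m}.
\]
Taking complements immediately yields
\[
\mathbb{P}\!\left(\max_{i\in[m]} |v_i| \le \log m\right) \ge 1 - 2 e^{-0.5\log^2 m + \log m},
\]
which is exactly the stated bound.

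There is no real obstacle here; the lemma is a routine application of Gaussian tail plus union bound, and the particular form of the failure probability in the statement is just what pops out after substituting $t=\log m$. The only small care needed is to ensure $\log m \ge 1$ (so that the simple tail bound $2e^{-t^2/2}$ applies cleanly), which holds for $m\ge e$; for small $m$ the bound is vacuous because $2e^{-0.5\log^2 m + \log m} \ge 1$, so the claim is trivially true there.
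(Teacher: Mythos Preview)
Your proposal is correct and follows essentially the same approach as the paper: the paper simply cites the Gaussian tail inequality (Eq.~(2.10) in Vershynin's \emph{High-Dimensional Probability}), applies the union bound, and substitutes $t=\log m$, which is exactly what you do. Your additional remark handling the small-$m$ case where the bound is vacuous is a nice touch that the paper omits.
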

The above lemma can be obtained by letting $t = \log m $ in Eq.$(2.10)$ in \cite{vershynin2018high} and using union bound.

Using the same analysis as in the proof of Theorem \ref{thm:independent}, we have, with probability at least $1-2e^{-0.5\log^2 m+\log m}$,
\begin{align*}
    \Big|\frac{1}{2}(\theta-\theta_0)^TH(\xi)(\theta-\theta_0)\Big| \le \frac{1}{2s(m)}\sum_{i=1}^m |v_i|\cdot\|H_{g_i}\|\cdot\|\rvw_{i}-\rvw_{i,0}\|^2 \le \frac{\beta\log m}{2s(m)}\sum_{i=1}^m \|\rvw_{i}-\rvw_{i,0}\|^2
\end{align*}
Under Assumption \ref{assu:smoothness}, $\beta$ is a constant. Note that as the number of $m$ goes to infinity, the probability $1-2e^{-0.5\log^2 m+\log m}$ converges to $1$. Since ${1}/{s(m)}$ is $O(1/\sqrt{m})$, hence, we get
\begin{equation}
     \Big|\frac{1}{2}(\theta-\theta_0)^TH(\xi)(\theta-\theta_0)\Big|= O\left(\frac{\log{m}}{\sqrt{m}}\right).
\end{equation}
When $m\to \infty$, the quadratic term of the Taylor expansion converges to $0$, with probability $1$.

\section{Experimental verification of the orthogonality in Theorem~\ref{thm:ortho_normal_vec}}\label{secapp:experiment}
In this section, we run experiments to verify the theoretical finding in  Theorem~\ref{thm:ortho_normal_vec}, that the normal vectors $\rvn_i$ within the same hidden layer tends to become more and more orthogonal, as the layer width increases.

Specifically, we run two experiments. In the first one, we use full-batch gradient descent to train $4$-layer fully-connected neural networks on a simple dataset $\mathcal{D}$: 20 images randomly selected from CIFAR-10 of the classes ``airplane'' or ``bird''. In the network, all the hidden layers have the same width $m$.
Given a training sample and any two neurons $\tilde{\alpha}_i$ and $\tilde{\alpha}_j$ in the same layer, we compute the cosine $\cos \theta_{ij}$ between the gradients $\nabla\tilde{\alpha}_i$ and $\nabla\tilde{\alpha}_j$ as
\begin{equation}\label{eq:cosine}
    \cos \theta_{ij} = \frac{\langle \nabla\tilde{\alpha}_i, \nabla\tilde{\alpha}_j\rangle}{|\nabla\tilde{\alpha}_i|\cdot|\nabla\tilde{\alpha}_i|}.
\end{equation}
As a metric to measure the orthogonality of the gradients, the average absolute cosine $|\cos \theta|$ is calculated by averaging $|\cos \theta_{ij}|$ over all pairs of $(i,j)$ with $i\ne j$ and over all data samples. Left panel of Figure \ref{fig:numerical} shows the numerical results of the average absolute cosine $|\cos \theta|$ as a function of network width $m$. We see that, as $m$ increases, $|\cos \theta|$ monotonically decreases towards zero, verifying that the neuron gradients becomes more and more orthogonal. 

In the second experiment, we consider a bottleneck network, which has three hidden layers, with the $1$st and $3$rd layers with large width $m=1000$ and $2$nd layer with width $m_b$ as the bottleneck layer. We train this bottleneck network using full-batch gradient descent on the same dataset $\mathcal{D}$ as in the first experiment. Given a training sample and any two neurons $\tilde{\alpha}_i$ and $\tilde{\alpha}_j$ in the bottleneck layer (i.e., $2$nd hidden layer), we compute the cosine of the angle between their gradients using Eq. (\ref{eq:cosine}), and then take the average of its absolute value over all pairs of different neurons and over all data samples. We use this averaged absolute cosine $|\cos \theta|$ as the metric to measure the orthogonality of the gradients. Right panel of Figure \ref{fig:numerical} shows the relation between $|\cos \theta|$ and the bottleneck width $m_b$. It is clear that $|\cos \theta|$ monotonically decreases towards zero as $m_b$ increase, verifying the orthogonality of gradients of bottleneck neurons. 

\begin{figure}
    \centering
    \includegraphics[width=0.48\textwidth]{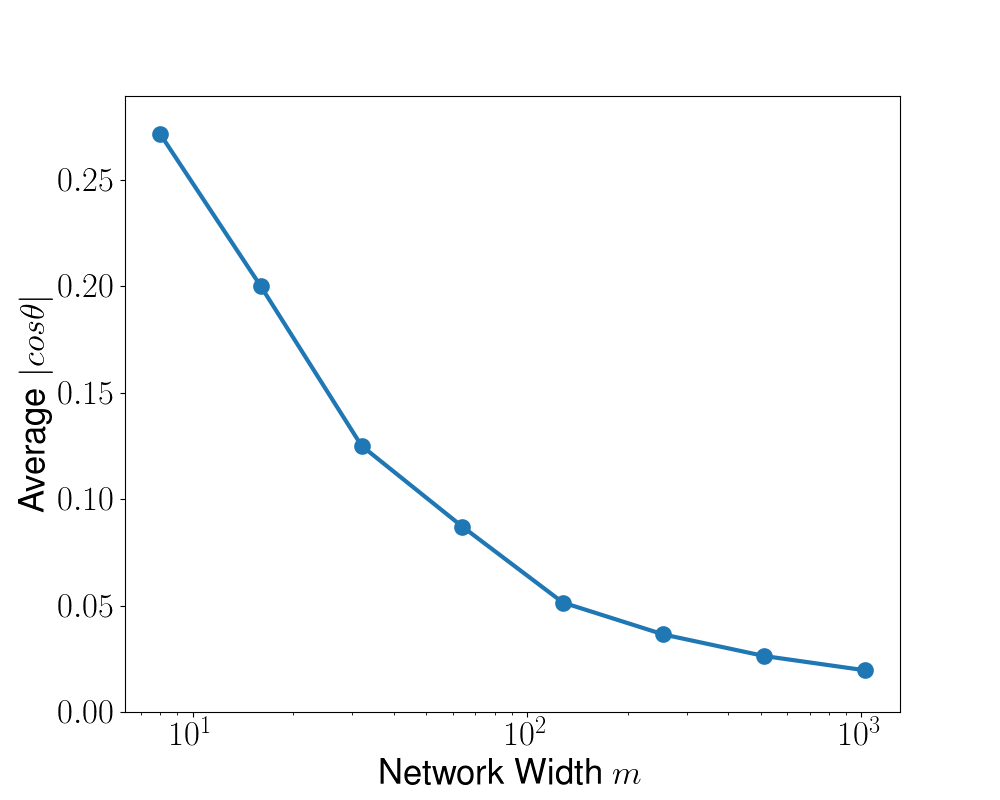}
    \includegraphics[width=0.48\textwidth]{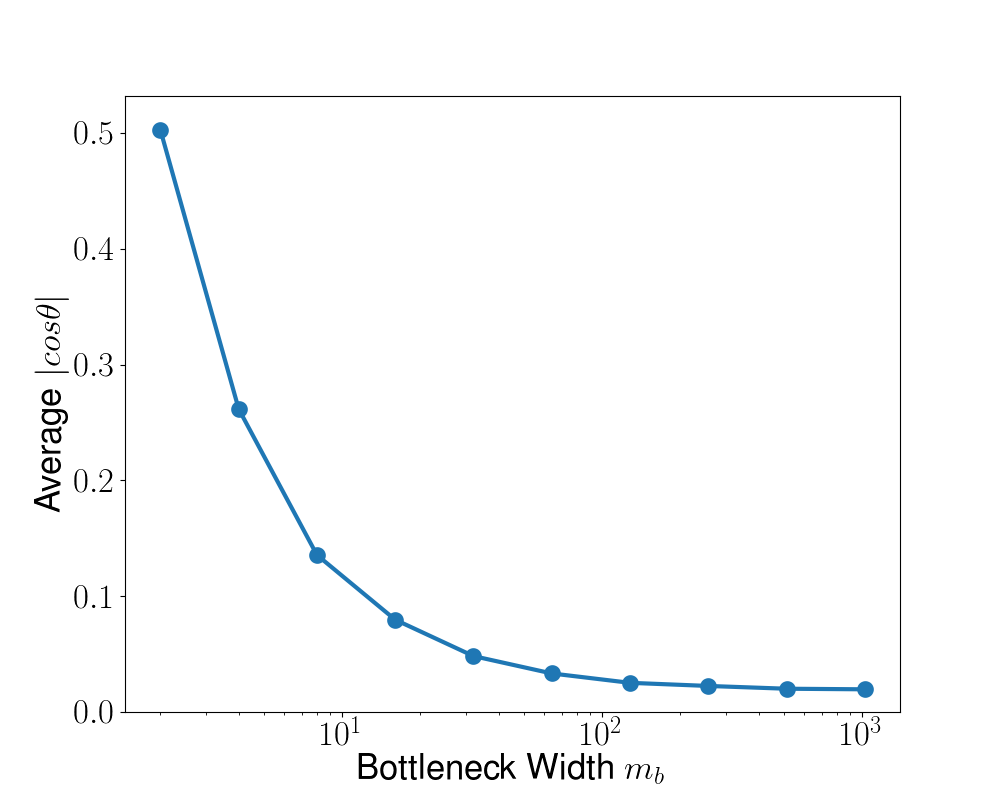}
    \caption{Absolute cosine of the angle between two neurons, averaged over all neuron pairs and data samples. {\bf Left:} $4$-layer fully-connected neural network, all hidden layers of which have equal width $m$; {\bf Right:} Bottleneck network with three hidden layers, $m_1=m_3=1000$ and $m_2=m_b$. In both cases, the average absolute cosine $|\cos\theta|$ monotonically decreases towards zero, consistent with the theoretical prediction of orthogonality between neuron gradient in Theorem~\ref{thm:ortho_normal_vec}. }
    \label{fig:numerical}
\end{figure}

\section{Proof of Lemma~\ref{lemma:orthor_partial_grad}}\label{secapp:b}
As in Eq.~(\ref{eq:partial_gradient}), the partial gradient $\nabla_{\theta^{(l-1)}}\tilde{\alpha}^{(l)}_i$ can be written as: $(\nabla_{\theta^{(l-1)}}\tilde{\alpha}^{(l)}_i)^T = \frac{1}{\sqrt{m_{l-1}}}(\rvw^{(l)}_{i,0})^T\nabla{\alpha}^{(l-1)}.$
Hence, we have
\begin{align*}
   ( \nabla_{\theta^{(l-1)}}\tilde{\alpha}^{(l)}_i)^T\nabla_{\theta^{(l-1)}}\tilde{\alpha}^{(l)}_j &= \frac{1}{m_{l-1}} (\rvw_{i,0}^{(l)})^T \nabla\alpha^{(l-1)} (\nabla\alpha^{(l-1)})^T  (\rvw_{j,0}^{(l)}).
\end{align*}
Note that $\rvw_{i,0}^{(l)}$ and $\rvw_{j,0}^{(l)}$ are independent to each other when $i\ne j$, and $\nabla\alpha^{(l-1)}$ is a fixed matrix independent of $\rvw_{i,0}^{(l)}$ and $\rvw_{j,0}^{(l)}$. Denote $\rva_j := \nabla\alpha^{(l-1)} (\nabla\alpha^{(l-1)})^T  \rvw_{j,0}^{(l)} \in \mathbb{R}^{m_{l-1}}$. Then,
\begin{align}\label{eqapp:gaussian}
    ( \nabla_{\theta^{(l-1)}}\tilde{\alpha}^{(l)}_i)^T\nabla_{\theta^{(l-1)}}\tilde{\alpha}^{(l)}_j = \frac{1}{m_{l-1}} (\rvw_{i,0}^{(l)})^T \rva_j,
\end{align}
the inner product of two independent vectors $\rvw_{i,0}^{(l)}$ and $\rva_j$ with a scaling factor $1/m_{l-1}$.

When conditioned on the vector $\rva_j$, the quantity in Eq.(\ref{eqapp:gaussian}) is a Gaussian variable:
\begin{align*}
    \frac{1}{m_{l-1}} (\rvw_{i,0}^{(l)})^T \rva_j \sim \mathcal{N}\left(0, \frac{\rva_j^T \rva_j}{m^2_{l-1}} \right),
\end{align*}
because $\rvw_{i}^{(l)}$ is initialized following $\mathcal{N}(\mathbf{0},I)$. In the limit of $m_{l-1}\to\infty$, 
if the variance $\frac{\rva_j^T \rva_j}{m^2_{l-1}}$ converges to $0$, then this Gaussian variable should also converges to zero with probability $1$, from which we can conclude the lemma. In the following, we will show that the variance $\frac{\rva_j^T \rva_j}{m^2_{l-1}}$ converges to $0$ with probability 1 in this limit.

First, we need the following lemma to upper bound the spectral norm of $\nabla \alpha^{(l)}$. The proof is deferred to Section~\ref{secapp:lemma_each_layer}.
\begin{lemma}\label{lemma:first_order}
Consider the  neural networks defined in Eq.(\ref{eq:generalnn}) with random Gaussian initialized parameters. There exist a constant $C>0$, such that, with probability at least $1-4(l+1)e^{-m/2}$, we have 
\begin{align*}
 \| \nabla\alpha^{(l)} (\nabla\alpha^{(l)})^T\| \leq C,\label{eqapp:bound_gradient_norm}
\end{align*}
with $m = \min \{m_1,m_2,\ldots,m_{l-1}\}$.

\end{lemma}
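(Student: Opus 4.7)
My plan is to induct on the layer index $l$, combining a chain-rule decomposition of the Jacobian with standard Gaussian concentration bounds and a companion induction on the activation norms.

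The starting point is the chain rule: from $\alpha^{(l)} = \sigma(\tilde\alpha^{(l)})$ and $\tilde\alpha^{(l)} = m_{l-1}^{-1/2} W^{(l)} \alpha^{(l-1)}$, the Jacobian decomposes as $\nabla\alpha^{(l)} = D^{(l)}[\,J_W \mid J_\theta\,]$, where $D^{(l)} = \diag(\sigma'(\tilde\alpha^{(l)}))$, $J_W$ collects partials with respect to the fresh weights $W^{(l)}$, and $J_\theta = m_{l-1}^{-1/2} W^{(l)} \nabla\alpha^{(l-1)}$ inherits the gradient of the previous layer. A direct calculation gives $J_W J_W^T = m_{l-1}^{-1}\|\alpha^{(l-1)}\|^2\, I_{m_l}$, so
\[
\|\nabla\alpha^{(l)}(\nabla\alpha^{(l)})^T\| \le \|\sigma'\|_\infty^2 \left(\tfrac{\|\alpha^{(l-1)}\|^2}{m_{l-1}} + \tfrac{\|W^{(l)}\|^2}{m_{l-1}}\,\|\nabla\alpha^{(l-1)}(\nabla\alpha^{(l-1)})^T\|\right).
\]
This recursion is what I would push through the induction.

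To close it, I would track two auxiliary high-probability facts in parallel at each layer. First, the Gaussian operator-norm bound $\|W^{(l)}\| \le 3\sqrt{\max(m_{l-1},m_l)}$, which fails with probability at most $2e^{-m/2}$ by standard concentration (e.g., Vershynin's Corollary 5.35). Second, the activation-norm bound $\|\alpha^{(l)}\|^2 \le C_\alpha m_l$, which I would establish by a companion induction using the weight bound together with a growth assumption $|\sigma(z)| \le a + b|z|$, again with per-layer failure probability at most $2e^{-m/2}$. Substituting into the recursion yields $C_l \le \|\sigma'\|_\infty^2(C_\alpha + 9\,C_{l-1})$, so $C_l$ stays a finite constant depending only on $l$, $\sigma$, and the initialization scale. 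A union bound over the at most $2(l+1)$ failure events ($W^{(k)}$ and $\alpha^{(k)}$ for $k \le l$) then delivers the claimed $1 - 4(l+1)e^{-m/2}$ success probability with $m = \min_k m_k$.

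The main obstacle I anticipate is the companion induction on activation norms: propagating $\|\alpha^{(l)}\|^2 = O(m_l)$ cleanly through the nonlinearity requires modest assumptions on $\sigma$ (bounded derivative and at-worst-linear growth, satisfied by the \emph{sigmoid}, \emph{tanh}, \emph{softplus} activations highlighted in the paper), and the constants compound multiplicatively in $l$. This is harmless only because the depth $L$ is fixed while each hidden width $m_k$ tends to infinity, so the many small-probability failure events still aggregate to a quantity vanishing in $m$.
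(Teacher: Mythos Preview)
Your proposal is correct and follows essentially the same approach as the paper: both rely on the chain-rule decomposition of $\nabla\alpha^{(l)}$, the Gaussian spectral-norm bound $\|W^{(l)}\|\le 3\sqrt{m}$ (the paper's Lemma~\ref{lemma:init_w}), and the activation-norm bound $\|\alpha^{(l)}\|=O(\sqrt{m})$ (the paper's Lemma~\ref{lemma:activations}, cited from \cite{liu2020linearity}). The only difference is organizational: you phrase the argument as a one-step recursion $C_l \le \|\sigma'\|_\infty^2(C_\alpha + 9\,C_{l-1})$ and iterate, whereas the paper unrolls the chain rule into the full product $\prod_{l''}\|\partial\alpha^{(l'')}/\partial\alpha^{(l''-1)}\|$ and bounds each factor directly, arriving at the explicit constant $(l-1)\,9^{l-1}\mathsf{L}_\sigma^{2l}C_\alpha^2$; the two are algebraically equivalent and yield the same union-bound probability.
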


Since $\rva_j=\nabla\alpha^{(l-1)} (\nabla\alpha^{(l-1)})^T  \rvw_{j,0}^{(l)}$, using Lemma~\ref{lemma:first_order}, we can bound $\|\rva_j^T \rva_j\|$ by
\begin{align*}
    \|\rva_j^T \rva_j\| \le \| \nabla\alpha^{(l-1)} (\nabla\alpha^{(l-1)})^T\|^2 \|\rvw_{j,0}^{(l)}\|^2 \le C^2 \|\rvw_{j,0}^{(l)}\|^2.
\end{align*} 

Recall that $\rvw_{j,0}^{(l)}\in \mathbb{R}^{m_{l-1}}$ follows the Gaussian distribution $\mathcal{N}(\mathbf{0},I)$. Then $\|\rvw_{j,0}^{(l)}\|^2\sim \chi^2(m_{l-1})$.
By Lemma 1 in \cite{laurent2000adaptive}, we have with probability at least $1-e^{-m_{l-1}}$,
\begin{align*}
    \|\rvw_{j,0}^{(l)}\|^2 \leq 5m_{l-1}.
\end{align*}

By union bound, we have with probability $1-2(L+1)e^{-m/2} - e^{-m_{l-1}}$, the variance 
$$\frac{\rva_j^T \rva_j}{m^2_{l-1}} \le \frac{5C^2}{m_{l-1}}$$

In the infinite network width limit $m_1,\ldots,m_{l-1}\to\infty$,  we see that the  variance converges to $0$ and the probability converges to $1$.

\subsection{Proof of Lemma~\ref{lemma:first_order}}\label{secapp:lemma_each_layer}
\begin{proof} First, note that $\nabla\alpha^{(l-1)} (\nabla\alpha^{(l-1)})^T$ can be decomposed as
\begin{align*}
    \nabla\alpha^{(l-1)} (\nabla\alpha^{(l-1)})^T = \sum_{l' = 1}^{l-1}  \nabla_{W^{(l')}}\alpha^{(l-1)} (\nabla_{W^{(l')}}\alpha^{(l-1)})^T.
\end{align*}
Then, its spectral norm can be bounded by
\begin{align}
    \| \nabla\alpha^{(l-1)} (\nabla\alpha^{(l-1)})^T\| &=  \|\sum_{l' = 1}^{l-1}  \nabla_{W^{(l')}}\alpha^{(l-1)} (\nabla_{W^{(l')}}\alpha^{(l-1)})^T\|\nonumber \\
    &\leq \sum_{l' = 1}^{l-1} \| \nabla_{W^{(l')}}\alpha^{(l-1)} (\nabla_{W^{(l')}}\alpha^{(l-1)})^T\| \nonumber\\
    &\leq \sum_{l'=1}^{l-1} \left\| \frac{\partial \alpha^{(l')}}{\partial W^{(l')}}\prod_{l'' = l'+1}^{l-1}\frac{\partial \alpha^{(l'')}}{\partial \alpha^{(l''-1)}} \left(\frac{\partial \alpha^{(l')}}{\partial W^{(l')}}\prod_{l'' = l'+1}^{l-1}\frac{\partial \alpha^{(l'')}}{\partial \alpha^{(l''-1)}}\right)^T \right\|\nonumber\\
    &\leq \sum_{l'=1}^{l-1}  \left\| \frac{\partial \alpha^{(l')}}{\partial W^{(l')}}\right\|^2 \prod_{l'' = l'+1}^{l-1}\left\|\frac{\partial \alpha^{(l'')}}{\partial \alpha^{(l''-1)}}\right\|^2.
\end{align}
In the following, we need to bound the terms $\| {\partial \alpha^{(l')}}/{\partial W^{(l')}}\|$ and $\|{\partial \alpha^{(l'')}}/{\partial \alpha^{(l''-1)}}\|$.

To simplify the presentation of the proof, we assume in the following that each network hidden layer has the same width $m$. We leave the general analysis for the readers. 

We use the following lemma to bound the spectral norm of the weight matrices at initialization.
\begin{lemma}\label{lemma:init_w}
If each component of $W_0^{(l)}$, $l\in[L]$, is i.i.d. drawn from $\mathcal{N}(0,1)$,  then with probability at least $1-2e^{-m/2}$,
\begin{align*}
    \|W_0^{(l)}\| \leq 3\sqrt{m}.
\end{align*}
\end{lemma}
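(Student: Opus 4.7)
The plan is to prove this as a standard concentration result for the operator norm of a Gaussian random matrix. I would first reduce to bounding the expected spectral norm, and then apply Gaussian Lipschitz concentration to get the desired high-probability bound.

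Specifically, I would start by recalling that the spectral norm $\|W\|$, viewed as a function of the entries of $W$ with the Frobenius norm as the underlying Euclidean metric, is $1$-Lipschitz. This is immediate from $|\|W\| - \|W'\|| \le \|W - W'\| \le \|W - W'\|_F$. Next, I would invoke the Gordon/Davidson--Szarek bound on the expected spectral norm of a Gaussian matrix with i.i.d.\ $\mathcal{N}(0,1)$ entries: for a $p \times q$ Gaussian matrix $G$, $\mathbb{E}[\|G\|] \le \sqrt{p} + \sqrt{q}$. Under the simplifying assumption stated just before the lemma that each hidden layer has equal width $m$, the relevant $W_0^{(l)}$ is an $m \times m$ matrix, so $\mathbb{E}[\|W_0^{(l)}\|] \le 2\sqrt{m}$.

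I would then apply the Borell--TIS (Gaussian concentration) inequality, which combined with the $1$-Lipschitz property yields
\begin{equation*}
\mathbb{P}\bigl(\|W_0^{(l)}\| \ge \mathbb{E}[\|W_0^{(l)}\|] + t\bigr) \le 2 e^{-t^2/2}
\end{equation*}
for any $t > 0$. Choosing $t = \sqrt{m}$ gives $\|W_0^{(l)}\| \le 2\sqrt{m} + \sqrt{m} = 3\sqrt{m}$ with probability at least $1 - 2e^{-m/2}$, which is exactly the statement of the lemma.

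There is no serious obstacle in this proof; it is essentially a packaging of standard tools from non-asymptotic random matrix theory. The only minor subtlety is the edge layers where the two dimensions may differ (e.g., the input layer with width $m_0 = d$), in which case I would use the sharper $\mathbb{E}[\|W\|] \le \sqrt{m_{l-1}} + \sqrt{m_l}$ bound and absorb the difference into the constant $3$ under the implicit assumption that $m$ is large enough relative to $d$, or simply state the lemma for the interior layers where the equal-width simplification applies.
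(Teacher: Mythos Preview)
Your proposal is correct and essentially reconstructs the proof of Corollary~5.35 in Vershynin's notes, which is exactly the result the paper invokes directly; the paper then sets $t=\sqrt{m}$ just as you do. So the approach is the same, differing only in that you unpack the Lipschitz-plus-Gaussian-concentration argument rather than citing the packaged corollary.
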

See the proof in Section~\ref{secapp:pf_weight_norm}.

There is also a lemma that bounds the Euclidean norm of each hidden layer neuron vector $\alpha^{(l)}$.
\begin{lemma}[Modified Lemma F.3 of \citep{liu2020linearity}]\label{lemma:activations}
There exists constants $C_{\alpha}, B_{\alpha}>0$ such that,  
with probability at least $1-2(L+1)e^{-m/2}$, for all $l\in[L]$, 
\begin{align}
    \|{\alpha}^{(l)}\| &\le C_{\alpha}\sqrt{m} + B_{\alpha}.\label{eqapp:activations}
\end{align}
\end{lemma}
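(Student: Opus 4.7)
The plan is to argue by induction on the layer index $l$, propagating an $O(\sqrt{m})$ bound through the recursion in Eq.~(\ref{eq:generalnn}). The two ingredients I need are (i) a spectral-norm bound on every weight matrix, supplied by Lemma~\ref{lemma:init_w}, and (ii) a Lipschitz-type bound on the activation $\sigma$. Since $\sigma$ is assumed twice differentiable and is one of the standard choices (sigmoid, tanh, softplus), it satisfies $|\sigma(z)| \le L_\sigma |z| + |\sigma(0)|$ for some constant $L_\sigma$; applied entrywise on $\mathbb{R}^m$ this gives $\|\sigma(z)\| \le L_\sigma \|z\| + \sqrt{m}\,|\sigma(0)|$, which is the estimate I need at each layer.

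First I would apply Lemma~\ref{lemma:init_w} to each of the $L$ weight matrices $W_0^{(1)},\ldots,W_0^{(L)}$ and take a union bound: with probability at least $1 - 2L\,e^{-m/2}$, every $\|W_0^{(l)}\| \le 3\sqrt{m}$. The remaining $2e^{-m/2}$ slack in the stated probability $1 - 2(L+1)e^{-m/2}$ is absorbed by the base case, which treats $\|\alpha^{(0)}\| = \|\rvx\|$ as bounded by a constant (this either follows from the assumption on the input domain $\mathcal{D}_\rvx$ or can be produced by a single additional concentration event of the same form). Conditioned on this good event, the pre-activation recursion gives
\begin{equation*}
\|\tilde{\alpha}^{(l)}\| \;=\; \Big\|\tfrac{1}{\sqrt{m_{l-1}}} W_0^{(l)} \alpha^{(l-1)}\Big\| \;\le\; \tfrac{\|W_0^{(l)}\|}{\sqrt{m_{l-1}}}\,\|\alpha^{(l-1)}\| \;\le\; 3\,\|\alpha^{(l-1)}\|,
\end{equation*}
and combining with the activation estimate yields the linear recurrence
\begin{equation*}
\|\alpha^{(l)}\| \;\le\; 3L_\sigma\,\|\alpha^{(l-1)}\| \;+\; \sqrt{m}\,|\sigma(0)|.
\end{equation*}

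Solving this recurrence in closed form gives $\|\alpha^{(l)}\| \le (3L_\sigma)^{l}\,\|\rvx\| + |\sigma(0)|\sqrt{m}\cdot\frac{(3L_\sigma)^{l} - 1}{3L_\sigma - 1}$ (with the natural limit when $3L_\sigma = 1$). Taking the maximum over $l \in [L]$ lets me define $C_\alpha := \max_{l\le L}\frac{(3L_\sigma)^l - 1}{3L_\sigma - 1}|\sigma(0)|$ and $B_\alpha := (3L_\sigma)^{L}\,\|\rvx\|$, which depend on $L$ and on the Lipschitz constant $L_\sigma$ but are independent of the widths $m_1,\ldots,m_L$, giving $\|\alpha^{(l)}\| \le C_\alpha\sqrt{m} + B_\alpha$ uniformly in $l$.

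I expect no genuine obstacle beyond bookkeeping; the one subtlety worth flagging is that the per-layer factor $3L_\sigma$ is $\ge 1$ for the standard activations, so the constants $C_\alpha,B_\alpha$ inherently grow exponentially in $L$. This is acceptable for the application, since the paper treats $L$ as fixed and sends $m_1,\ldots,m_{L-1} \to \infty$, but it is the reason the stated bound has the form "$C_\alpha\sqrt{m} + B_\alpha$" rather than anything sharper in $L$. A minor care point is the base case: one must either assume $\|\rvx\| \le B_0$ explicitly or absorb it into $B_\alpha$, which is the role of the extra $e^{-m/2}$ event in the probability $1 - 2(L+1)e^{-m/2}$.
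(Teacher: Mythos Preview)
The paper does not actually give its own proof of this lemma: it is stated as a borrowed result (``Modified Lemma~F.3 of \citep{liu2020linearity}'') and used as a black box inside the proof of Lemma~\ref{lemma:first_order}. So there is nothing in the paper to compare your argument against directly. Your inductive approach---union-bound Lemma~\ref{lemma:init_w} across layers, then propagate the affine Lipschitz estimate $\|\sigma(z)\|\le L_\sigma\|z\|+\sqrt{m}\,|\sigma(0)|$ through the recursion---is the standard route and is essentially correct.

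One bookkeeping point to fix: your uniform step $\|\tilde\alpha^{(l)}\|\le 3\|\alpha^{(l-1)}\|$ uses $\|W_0^{(l)}\|\le 3\sqrt{m_{l-1}}$, but Lemma~\ref{lemma:init_w} gives $\|W_0^{(l)}\|\le 3\sqrt{m}$ with $m$ the hidden width, and for $l=1$ the normalization is $1/\sqrt{m_0}=1/\sqrt{d}$, not $1/\sqrt{m}$. Hence the first step actually yields $\|\tilde\alpha^{(1)}\|\le (3\sqrt{m}/\sqrt{d})\|\rvx\|$, which is already $O(\sqrt{m})$. This means the $\|\rvx\|$-dependent contribution belongs in $C_\alpha$ (multiplied by $\sqrt{m}$), not in $B_\alpha$; your assignment of the two constants is swapped. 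The fix is cosmetic: start the induction at $l=1$ with $\|\alpha^{(1)}\|\le \sqrt{m}\bigl(3L_\sigma\|\rvx\|/\sqrt{d}+|\sigma(0)|\bigr)$ and then run your recurrence for $l\ge 2$. The conclusion $\|\alpha^{(l)}\|\le C_\alpha\sqrt{m}+B_\alpha$ survives unchanged. Your remark about the extra $2e^{-m/2}$ in the probability is also off: $\|\rvx\|$ is deterministic and needs no concentration event; the $(L{+}1)$ is most naturally read as a loose count of weight matrices (the proof of Lemma~\ref{lemma:init_w} lists $W_0^{(1)},\ldots,W_0^{(L+1)}$), not as an event for the input.
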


Using the above two lemmas, we can bounded those two terms. 
At initialization, we have, with probability at least $1-2e^{-m/2}$,
\begin{align*}
\left\|\frac{\partial \alpha^{(l)}}{\partial \alpha^{(l-1)}}\right\|^2 &=\sup_{\|\rvv\|=1}\frac{1}{{m}}\sum_{i=1}^m\left(\sigma'(\tilde{\alpha}^{(l)}_i) W^{(l)}_{0,ij} v_j\right)^2\\
&=\sup_{\|\rvv\|=1}\frac{1}{{m}} \|{\Sigma'}^{(l)}W^{(l)}_0\rvv \|^2\\
&\le \frac{1}{{m}}\| {\Sigma'}^{(l)}\|^2\|W^{(l)}_0\|^2\\
&\le 9\mathsf{L}_\sigma^2,
\end{align*}
and with probability at least $1-2(L+1)e^{-m/2}$, for all $l\in[L]$,
\begin{eqnarray}
\left\|\frac{\partial \alpha^{(l)}}{\partial W^{(l)}}\right\|^2 &=& \sup_{\|V\|_F = 1} \frac{1}{{m}}\sum_{i=1}^m\Big(\sum_{j,j'}\sigma'(\tilde{\alpha}^{(l)}_i)\alpha^{(l-1)}_{j'}\mathbb{I}_{i=j}V_{jj'}\Big)^2\nonumber\\
&=&\sup_{\|V\|_F = 1} \frac{1}{{m}}\| {\Sigma'}^{(l)} V \alpha^{(l-1)}\|^2\nonumber \\
    &\leq& \frac{1}{{m}}\| {\Sigma'}^{(l)}\|^2 \| \alpha^{(l-1)}\|^2 \nonumber\\
    &\leq& \mathsf{L}_\sigma^{2}C_\alpha^2 + \frac{1}{m}\mathsf{L}_\sigma^{2}B_\alpha^2.\label{eqaux:nabla_w}
\end{eqnarray}
Here ${\Sigma'}^{(l)}$ is a diagonal matrix, with the diagonal entry ${\Sigma'}^{(l)}_{ii}=\sigma'(\tilde{\alpha}^{(l)}_i)$ and
 $\mathsf{L}_\sigma$ is the degree of Lipschitz continuity of the activation function $\sigma(\cdot)$.

Now, using the above results, we can upper bound the spectral norm of the matrix $\nabla\alpha^{(l-1)} (\nabla\alpha^{(l-1)})^T$.
With probability $1-4(L+1)e^{-m/2}$, we have
\begin{align}
    \| \nabla\alpha^{(l-1)} (\nabla\alpha^{(l-1)})^T\| 
    &\leq \sum_{l'=1}^{l-1}  \left\| \frac{\partial \alpha^{(l')}}{\partial W^{(l')}}\right\|^2 \prod_{l'' = l'+1}^{l-1}\left\|\frac{\partial \alpha^{(l'')}}{\partial \alpha^{(l''-1)}}\right\|^2\nonumber\\ 
    &\leq (l-1)9^{l-1} \mathsf{L}_\sigma^{2l}(C_\alpha^2 + \frac{1}{m}B_\alpha^2).
\end{align}
Since $l \le L$, we can see that, for sufficient large network width $m$, the spectral norm   $\| \nabla\alpha^{(l-1)} (\nabla\alpha^{(l-1)})^T\|$ is bounded by a constant.
\end{proof}

\subsection{Proof of Lemma~\ref{lemma:init_w}}\label{secapp:pf_weight_norm}
\begin{proof}
Consider an arbitrary random matrix $A \in \mathbb{R}^{m_a\times m_b}$ with each entry $A_{i,j}\sim \mathcal{N}(0,1)$. By Corollary 5.35 of~\cite{vershynin2010introduction}, for any $t>0$, we have with probability at least $1-2\mathrm{exp}(-\frac{t^2}{2})$, 
\begin{equation}
    \| A \| \leq \sqrt{m_a}+ \sqrt{m_b} + t.
\end{equation}
For the initial neural network weight matrices, we have
\begin{align*}
         \| W_0^{(1)} \|_2 &\leq \sqrt{d}+\sqrt{m}+t,  \\
          \| W_0^{(l)} \|_2 &\leq 2\sqrt{m}+t, \quad l \in \{2,3,...,L\},\\
         \| W_0^{(L+1)} \|_2 &\leq \sqrt{m}+1+t.
\end{align*}
Letting $t = \sqrt{m}$ and noting that $m>d$,  we finish the proof.

\end{proof}

\section{Including $\rvw^{(l+1)}$ as parameters}\label{secapp:relax_const}
Now, we consider the more general case where parameters in layer $l+1$ are not fixed. Then, the quadratic term of the pre-activation $\tilde{\alpha}^{(l+1)}$ is written as:
\begin{eqnarray}
    & &\frac{1}{2}(\theta-\theta_0)^TH_{\tilde{\alpha}^{(l+1)}}(\xi)(\theta-\theta_0) \nonumber\\
    &=&  \frac{1}{2} \left(
   \begin{array}{c}
        \rvw^{(l+1)}-\rvw^{(l+1)}_0 \\
        \theta^{(l)} - \theta^{(l)}_0
   \end{array} 
    \right)^T
    \left[\begin{array}{cc}
      \frac{\partial^2 \tilde{\alpha}^{(l+1)}}{(\partial\rvw^{(l+1)})^2} (\xi)  &  \frac{\partial^2 \tilde{\alpha}^{(l+1)}}{\partial \theta^{(l)}\partial\rvw^{(l+1)}}(\xi) \\
      \frac{\partial^2 \tilde{\alpha}^{(l+1)}}{\partial\rvw^{(l+1)}\partial \theta^{(l)}} (\xi)  & \frac{\partial^2 \tilde{\alpha}^{(l+1)}}{(\partial\theta^{(l)})^2} (\xi)
    \end{array}\right]
    \left(\begin{array}{c}
        \rvw^{(l+1)}-\rvw^{(l+1)}_0 \\
        \theta^{(l)} - \theta^{(l)}_0
   \end{array} 
    \right) \nonumber\\
    &=& \frac{1}{2}(\rvw^{(l+1)}-\rvw^{(l+1)}_0)^T\frac{\partial^2 \tilde{\alpha}^{(l+1)}}{(\partial\rvw^{(l+1)})^2} (\xi)(\rvw^{(l+1)}-\rvw^{(l+1)}_0) \quad (=: \mathcal{A})\nonumber\\
    & &+\frac{1}{2}(\theta^{(l)} - \theta^{(l)}_0)^T\frac{\partial^2 \tilde{\alpha}^{(l+1)}}{(\partial\theta^{(l)})^2} (\xi)(\theta^{(l)} - \theta^{(l)}_0)\quad (=: \mathcal{B})\nonumber\\
    & &+(\rvw^{(l+1)}-\rvw^{(l+1)}_0)^T\frac{\partial^2 \tilde{\alpha}^{(l+1)}}{\partial\rvw^{(l+1)}\partial \theta^{(l)}}(\xi)(\theta^{(l)} - \theta^{(l)}_0) \quad (=: \mathcal{C})\nonumber
\end{eqnarray}
where $\xi$ is some point between $\theta_0$ and $\theta$.

Note that $\tilde{\alpha}^{(l+1)}$ is linear in $\rvw^{(l+1)}$, hence, $\frac{\partial^2 \tilde{\alpha}^{(l+1)}}{(\partial\rvw^{(l+1)})^2}$ is always a zero matrix. Hence, term $\mathcal{A}\equiv 0$. 
Moreover, by Theorem \ref{thm:bound_non_linear},  
the term
$\mathcal{B}$ becomes zero, as the width $m_l$ increases to infinity. 
For the term $\mathcal{C}$, we note that
\[ 
\frac{\partial^2 \tilde{\alpha}^{(l+1)}}{\partial \rvw^{(l+1)}\partial \theta^{(l)}} = \frac{1}{\sqrt{m_l}}\nabla \alpha^{(l)}.
\]
Here, $\alpha^{(l)} = (\alpha^{(l)}_1, \ldots, \alpha^{(l)}_{m_l})$ is the vector of the neurons in layer $l$ and $\nabla \alpha^{(l)}$ is the matrix of the first-order derivative of the vector $\alpha^{(l)}$. Using Lemma~\ref{lemma:first_order}, we have an upper bound on the spectral norm: $\|\nabla \alpha^{(l)}\| \le \sqrt{C}$, with $C>0$ being a constant.
Therefore, we have
\begin{equation}
    |\mathcal{C}| \le \frac{1}{\sqrt{m_l}}\|\rvw^{(l+1)}-\rvw^{(l+1)}_0\|\|\nabla \alpha^{(l)}\|\|\theta^{(l)} - \theta^{(l)}_0\|\le \frac{R^2 \sqrt{C}}{\sqrt{m_l}}.
\end{equation}
In the limit of $m_l\to\infty$, term $\mathcal{C}$ converges to $0$.
Therefore, we have proved that each $(l+1)$-th layer pre-activation $\tilde{\alpha}^{(l+1)}$ has a vanishing quadratic term in its Taylor expansion Eq.(\ref{eq:lagrange}), and hence becomes linear, in the limit of $m_l\to\infty$.

\section{DenseNet}\label{secapp:densenet}
In a DenseNet~\citep{huang2017densely}, a $l$-th layer neuron takes all previous layer neurons as inputs:
\begin{align}\label{eq:densenet}
    {\alpha}^{(l)} = \sigma(\tilde{\alpha}^{(l)}), \quad \tilde{\alpha}^{(l)}= \frac{1}{\sqrt{m}}W^{(l)}\ast \mathrm{Concat}[\alpha^{(l-1)},\ldots,\alpha^{(1)},\rvx], 
\end{align}
where the ``Concat'' function concatenates all the vector arguments into a single long vector with $m$ being the size, and $\ast$ is the matrix multiplication. The pre-activation can be viewed as the sum of $l$ linear functions, $\tilde{\alpha}^{(l)}= \sum_{l'=1}^{l}\tilde{\alpha}_U^{(l')}$, where each $\tilde{\alpha}_U^{(l')} := \frac{1}{\sqrt{m}}U^{(l')}\alpha^{(l')}$ has a similar form as the pre-activation in full-connected neural network Eq.(\ref{eq:generalnn}), but with parameters $U^{(l')}$ being a subset of $W^{(l)}$. Hence, using the analysis in Section~\ref{sec:neural_networks}, in the infinite network width limit and in $O(1)$-neighborhoods of the initialization, each of the pre-activations of the DenseNet is a summation of $l\le L$ linear functions, and hence is linear. Therefore, the DenseNet is also linear in $O(1)$-neighborhoods of the initialization.
\end{document}